\documentclass{article}

\usepackage{microtype}
\usepackage{graphicx}
\usepackage{amsmath}
\usepackage{amssymb}
\usepackage{mathtools}
\usepackage{amsthm}
\usepackage{amsfonts}

\usepackage{subcaption}
\usepackage{booktabs}   
\usepackage{multirow}
\usepackage{tabularx}   
\usepackage[table]{xcolor} 
\usepackage{arydshln}   
\usepackage{wrapfig}

\usepackage{algorithm}
\usepackage{algorithmic}   
\usepackage{listings}

\usepackage[most]{tcolorbox}
\tcbuselibrary{breakable,skins,theorems}

\usepackage{enumitem}
\usepackage[textsize=tiny]{todonotes}

\usepackage{times}
\usepackage{helvet}
\usepackage{courier}

\usepackage{hyperref}
\usepackage{url}
\usepackage[capitalize,noabbrev]{cleveref}

\usepackage[accepted]{icml2026}

\makeatletter
\@ifundefined{theHalgorithm}
  {}
  {}
\makeatother

\newcolumntype{Y}{>{\centering\arraybackslash}X}

\definecolor{PaleGreen}{RGB}{152, 251, 152}

\definecolor{appblue}{RGB}{8,80,152}
\definecolor{nonred}{RGB}{160,8,16}
\definecolor{appblue}{RGB}{31,119,180}
\definecolor{nonred}{RGB}{214,39,40}

\definecolor{llama_bg}{HTML}{F0F8FF} 
\definecolor{qwen_bg}{HTML}{FFF5E6} 

\theoremstyle{plain}
\newtheorem{theorem}{Theorem}[section]

\theoremstyle{definition}

\theoremstyle{remark}

\newtcbtheorem[number within=section]{boxedtheorem}{Theorem}{
  enhanced,
  breakable,
  colback=white,
  colframe=black!20,
  boxrule=0.6pt,
  arc=2mm,
  left=8pt,right=8pt,top=6pt,bottom=6pt,
  colbacktitle=black!18,
  coltitle=black,
  fonttitle=\bfseries,
  attach title to upper,
}{thm}

\newtcolorbox{boxedremark}{
  enhanced,
  breakable,
  colback=black!3,
  colframe=black!20,
  boxrule=0.6pt,
  arc=2mm,
  left=8pt,right=8pt,top=6pt,bottom=6pt,
  colbacktitle=black!18,
  coltitle=black,
  fonttitle=\bfseries,
  title=Remark
}

\icmltitlerunning{Multi-Adapter Representation Interventions via Energy Calibration}

\begin{document}

\twocolumn[
  \icmltitle{Multi-Adapter Representation Interventions via Energy Calibration}



  \icmlsetsymbol{equal}{*}

    
    \begin{icmlauthorlist}
      \icmlauthor{Manjiang Yu}{uq,mbzuai,isct}
      \icmlauthor{Hongji Li}{mbzuai}
      \icmlauthor{Junwei Chen}{isct}
      \icmlauthor{Xue Li}{uq}
      \icmlauthor{Priyanka Singh}{uq}
      \icmlauthor{Yang Cao}{isct}
      \icmlauthor{Lijie Hu}{mbzuai}
    \end{icmlauthorlist}
    
    \icmlaffiliation{uq}{The University of Queensland, Brisbane, Australia}
    \icmlaffiliation{mbzuai}{Mohamed bin Zayed University of Artificial Intelligence, Abu Dhabi, United Arab Emirates}
    \icmlaffiliation{isct}{Institute of Science Tokyo, Tokyo, Japan}
    
    \icmlcorrespondingauthor{Lijie Hu}{lijie.hu@mbzuai.ac.ae}
    
    \icmlkeywords{Machine Learning, Representation Intervention, Energy Calibration, Large Language Models}

  \vskip 0.3in
]



\printAffiliationsAndNotice{}  

\begin{abstract}
Representation intervention has emerged as a promising paradigm for aligning Large Language Models (LLMs) toward desired behaviors without modifying model weights. Existing methods typically apply a fixed intervention uniformly across all inputs. However, we find that the appropriate intervention direction and strength vary substantially across samples, and such indiscriminate intervention leads to the degradation of general capabilities on benign inputs. To address these challenges, we propose \textbf{M}ulti-\textbf{A}dapter \textbf{R}epresentation \textbf{I}nterventions via Energy Calibration (\textbf{MARI}). Specifically, we introduce a Competitive Multi-Adapter mechanism where specialized experts capture non-linear correction patterns to adaptively determine the appropriate intervention direction and strength for different samples. Furthermore, we design an Energy-Based Gating module that leverages internal propagation dynamics to distinguish inputs applicable for intervention. Extensive experiments across diverse model families and parameter scales demonstrate that MARI achieves state-of-the-art alignment performance. Our method significantly improves TruthfulQA, BBQ, and Safety benchmarks while maintaining and even improving the performance of general tasks such as MMLU and ARC. Our code is available at \url{https://github.com/V1centNevwake/MARI}.

\end{abstract}

\begin{figure}[!t]
  \centering
  \captionsetup{skip=2pt}
  \includegraphics[width=\linewidth]{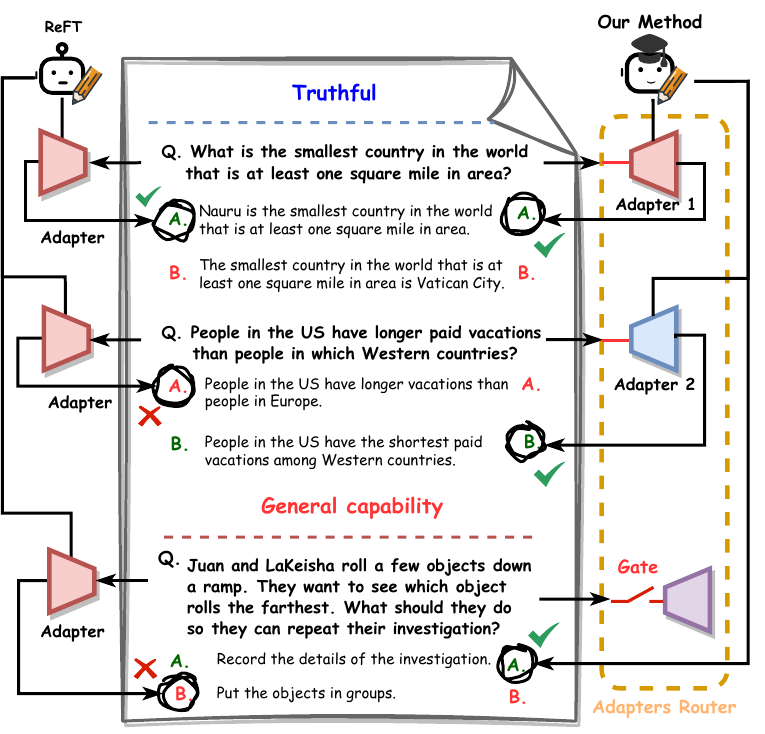}
    \caption{\textbf{Comparison of intervention results.}
    (1) \textbf{Alignment Reliability:} Static methods (left) rely on a fixed adapter, which leads to unstable performance. In contrast, MARI (right) ensures consistent correctness across diverse inputs.
    (2) \textbf{General Capability Preservation:} Existing methods intervene indiscriminately, which impairs general capability. MARI overcomes this by employing an energy gate that distinguishes intervention-applicable inputs from benign ones, preserving the model's general capabilities.}
  \label{fig:eg-mari}
\end{figure}

\section{Introduction}
Large Language Models (LLMs)~\citep{brown2020language, touvron2023llama, openai2023gpt} have achieved remarkable success across a broad spectrum of natural language processing tasks, ranging from question answering to open-ended generation~\cite{su2023detectllm,su2023fake,cheng2024multi,zhang2026controlling}. However, ensuring that these models behave in alignment with desired attributes such as truthfulness, fairness, or informativeness remains an open challenge, especially in real-world deployments where outputs must satisfy domain-specific or ethical constraints~\citep{zhang2025understanding,wang2025truth,guo2025benchmarking,yang2026visual,zhou2025flattery,li2025can,su2025understanding}. To address this necessity, a growing body of research has converged on representation intervention as a potent control paradigm. By identifying and manipulating specific patterns within the internal activation space at inference time, this approach enables precise guidance of model outputs toward target properties without the need for modifying the underlying model weights~\citep{rimsky2024steering,yang2024exploring,hu2024hopfieldian}. Most existing intervention methods, such as Activation Steering~\citep{turner2023activation, rimsky2024steering} and ReFT~\citep{wu2024reft}, are essentially based on the linear representation hypothesis~\citep{park2023linear, elhage2021mathematical}. This hypothesis posits that high-level semantic concepts are encoded as linear directions in the activation space. Based on this, these methods typically apply a fixed steering vector or low-rank updates uniformly to all inputs.

Despite the empirical success of these methods in aligning LLMs, the underlying linear representation hypothesis does not hold universally.~\citep{cunningham2023sparse} Intuitively, given the vast diversity of input queries and the inherent complexity of attributes like truthfulness, it is hard to imagine that a single, static global intervention can effectively align all inputs. To further dig into the validity of this assumption, we conduct an empirical analysis in Section~\ref{subsec:mot}. Our results reveal that a unified intervention vector is strictly unable to accommodate all inputs: optimal correction directions are highly heterogeneous and, in many cases, contradict the global trend. Furthermore, even with a precise intervention direction, ensuring the effectiveness of the intervention without compromising the model's general capability remains a key challenge. Blindly intervening in all queries (including those that don't require correction) can lead to over-intervention, disrupting the model's internal representation of benign inputs. Such indiscriminate application has been shown to reduce the robustness of general inference~\citep{wolf2024tradeoffs}.

To address these challenges, we propose \textbf{MARI}. 
To cope with heterogeneous intervention requirements, we introduce a \textbf{Competitive Multi-Adapter} mechanism that learns multiple lightweight low-rank adapters at a single injection site. 
Through competitive training and inference-time routing, MARI produces input-adaptive interventions instead of applying a single global edit to all queries.
Furthermore, to preserve general capabilities and avoid over-intervention on benign inputs, we design an \textbf{Energy-Based Gate}  guided by off-subspace regularization. 
It leverages propagation-response energy under a small probe update to provide a label-free applicability signal, enabling reliable threshold gating that triggers intervention only when beneficial and otherwise falls back to the frozen base model.

Our main contributions are summarized as follows:
\begin{itemize}[leftmargin=*, nosep]
    \item We identify the limitations of static linear interventions for alignment: required intervention direction and strength are highly input- and state-dependent, and indiscriminate always-on editing can degrade general capabilities.
    \item We propose a \textbf{Competitive Multi-Adapter} strategy that trains complementary low-rank adapters and uses inference-time routing to realize input-adaptive representation interventions.
    \item We introduce an \textbf{Energy-Based Gate} with off-subspace regularization, where propagation-response energy provides a label-free applicability signal for reliable threshold control at inference time.
    \item We demonstrate great and consistent improvements on alignment benchmarks across model families and scales, while maintaining or even improving general capability.
\end{itemize}

\vspace{-10pt}
\section{Related Work}
\vspace{-0.05in}
\noindent\textbf{Representation Intervention.} Prior studies have demonstrated that LLMs encode rich semantics within their activation space. Core behaviors, including trustworthiness~\citep{zou2023representation, marks2023geometry,li2025towards}, refusal~\citep{arditi2024refusal}, and reasoning~\citep{liu2024enhancing}, have been identified as specific patterns within these internal states. Building on these insights, recent research has established that model outputs can be effectively controlled by directly manipulating these internal representations. For example, Activation Steering~\citep{turner2023activation, subramani2022extracting,yu2026pixel,jiang2025msrs,jiang2026global} controls model generation by adding a specific steering vector to the hidden states during inference. Representation Finetuning~\citep{wu2024reft} intervenes on hidden representations by learning a low-rank update matrix. Additional related work is provided in Appendix \ref{app:related}

\vspace{-10pt}
\section{Preliminaries}
\label{subsec:prelim}
\vspace{-0.05in}
\noindent\textbf{Representation Intervention.}
Let $\mathbf{f}_{\theta}$ be a pretrained Transformer language model with parameters $\theta$ kept frozen.
For an input token sequence $x=(x_1,\ldots,x_n)$, let $\mathbf{h}^{(l)}_p(x)\in\mathbb{R}^{d}$ denote the hidden state at layer $l$ and token position $p$. We follow the representation-intervention paradigm (e.g., ReFT~\cite{wu2024reft}) and learn an editor while leaving the base model unchanged.
We intervene at a single layer--position pair $(l^*,p^*)$ (e.g., the last prompt token).
Let $\Phi_{\psi}:\mathbb{R}^d\rightarrow\mathbb{R}^d$ be an intervention function with parameters $\psi$.
The intervention \emph{replaces} the hidden state at $(l^*,p^*)$ with
\begin{equation}
\tilde{\mathbf{h}}^{(l^*)}_{p^*}(x) \;=\; \Phi_{\psi}\!\left(\mathbf{h}^{(l^*)}_{p^*}(x)\right).
\label{eq:single_pos_edit}
\end{equation}

The edited state is then propagated through the remaining layers, inducing an intervened distribution $p_{\theta,\psi}(\cdot\mid x)$.
We train only $\psi$.

\noindent\textbf{Low-rank editor.}
We parameterize $\Phi_{\psi}$ as an additive low-rank update with a nonnegative global scale $\gamma\ge 0$:
\begin{equation}
\Phi_{\psi}(\mathbf{h}) \;=\; \mathbf{h} \;+\; \gamma\, s_{\psi}\,\Delta_{\psi}(\mathbf{h}),
\label{eq:additive_edit}
\end{equation}

where $s_{\psi}\ge 0$ is an optional (learnable) scalar scale that can be shared or frozen across adapters in our implementation.
We instantiate $\Delta_{\psi}$ with a rank-$r$ factorization
\begin{equation}
\Delta_{\psi}(\mathbf{h}) \;=\; \mathbf{U}\left(\mathbf{V}^{\top}\mathbf{h}+\mathbf{b}\right),
\qquad
\mathbf{U},\mathbf{V}\in\mathbb{R}^{d\times r},\ \mathbf{b}\in\mathbb{R}^{r},
\label{eq:lowrank_update}
\end{equation}

so that the intervention output always lies in the $r$-dimensional subspace $\mathrm{span}(\mathbf{U})$, while the input is first projected into an $r$-dimensional bottleneck via $\mathbf{V}^{\top}\mathbf{h}$.
This yields an input-dependent update while restricting its degrees of freedom to a low-dimensional subspace.

\vspace{-10pt}
\section{Methodology}

We present \textbf{MARI}. We begin by motivating our multi-adapter approach through the limitations of the linear representation hypothesis (\cref{subsec:mot}). Next, we introduce a competitive training strategy where diverse experts collectively realize input-dependent actuation (\cref{subsec:train}). Finally, we describe an energy-based gating mechanism to regulate intervention applicability at inference time (\cref{subsec:integrate}).


\vspace{-10pt}
\subsection{Motivation: Limitations of the Linear Representation Hypothesis}
\label{subsec:mot}

Many representation interventions (e.g., steering vectors and ReFT) implicitly assume an edit can be captured by a \emph{single, input-invariant} low-dimensional update at a fixed layer---a fixed direction/subspace applied uniformly across inputs. Under this hypothesis, improving an attribute like truthfulness reduces to learning one global low-rank transformation that shifts hidden states toward a desirable region.

\noindent\textbf{A diagnostic reveals state-dependent heterogeneity.}
We test this hypothesis on TruthfulQA (MC1) using a simple representation-space diagnostic.
For questions where the frozen model selects an incorrect option $\hat{y}$ while the correct option is $y^\star$, we extract hidden activations at a fixed layer $l^\star$
and form a per-sample correction vector
\begin{equation}
\Delta(x) = a(x,y^\star) - a(x,\hat{y})
\label{eq:mot_delta}
\end{equation}

where $a(x,y)$ denotes a pooled activation vector over the answer-option span (details in Appendix~\ref{app:method_supp}).
We then sort samples by a scalar coordinate $t(x)$ derived from the hallucinated state (e.g., $\mathrm{PC1}(a(x,\hat{y}))$)
and summarize, in sliding windows along $t(x)$, (i) the required \emph{strength} $s(x)=\|\Delta(x)\|_2$ and (ii) the \emph{directional dispersion} of $\Delta(x)$
(using a cosine-based deviation statistic; see Appendix~\ref{app:method_supp}).
\Cref{fig:motivate1} reports window-wise medians with quantile bands.

\begin{figure}[h]
    \centering
    \includegraphics[width=1\columnwidth]{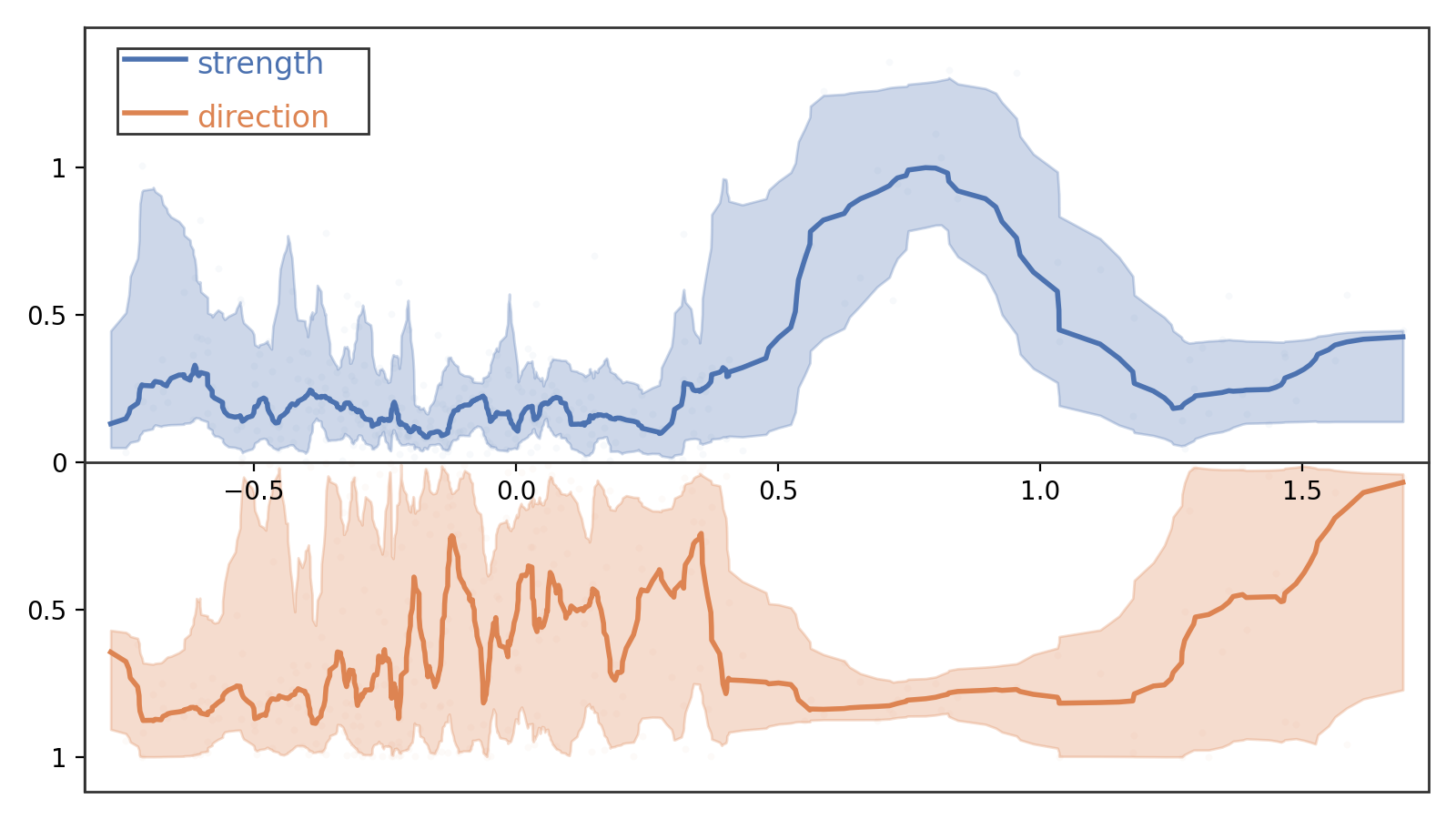}
    \vskip -0.1in
    \caption{\textbf{Variability of intervention needs.} The visualization shows that the optimal strength (top) and direction (bottom) vary significantly across different inputs.}
    \label{fig:motivate1}
    \vskip -0.2in
\end{figure}

\noindent\textbf{Observation I (State-dependent heterogeneity).}
Both the required magnitude and direction of $\Delta(x)$ vary substantially across $t(x)$ (and even locally within windows),
indicating that the intervention requirements are not globally consistent; a single global linear edit may therefore fail to cover all regimes reliably.

\noindent\textbf{Observation II (Over-intervention).}
To cover multiple regimes with one edit, one may increase rank or scale, but stronger always-on interventions can perturb benign inputs whose required direction/magnitude differs,
degrading general capability. These observations motivate input-adaptive actuation with selective application (routing and gating).

\begin{figure}[t]
  \centering
 \includegraphics[width=\linewidth,trim=0 0cm 0 0.0cm,clip]{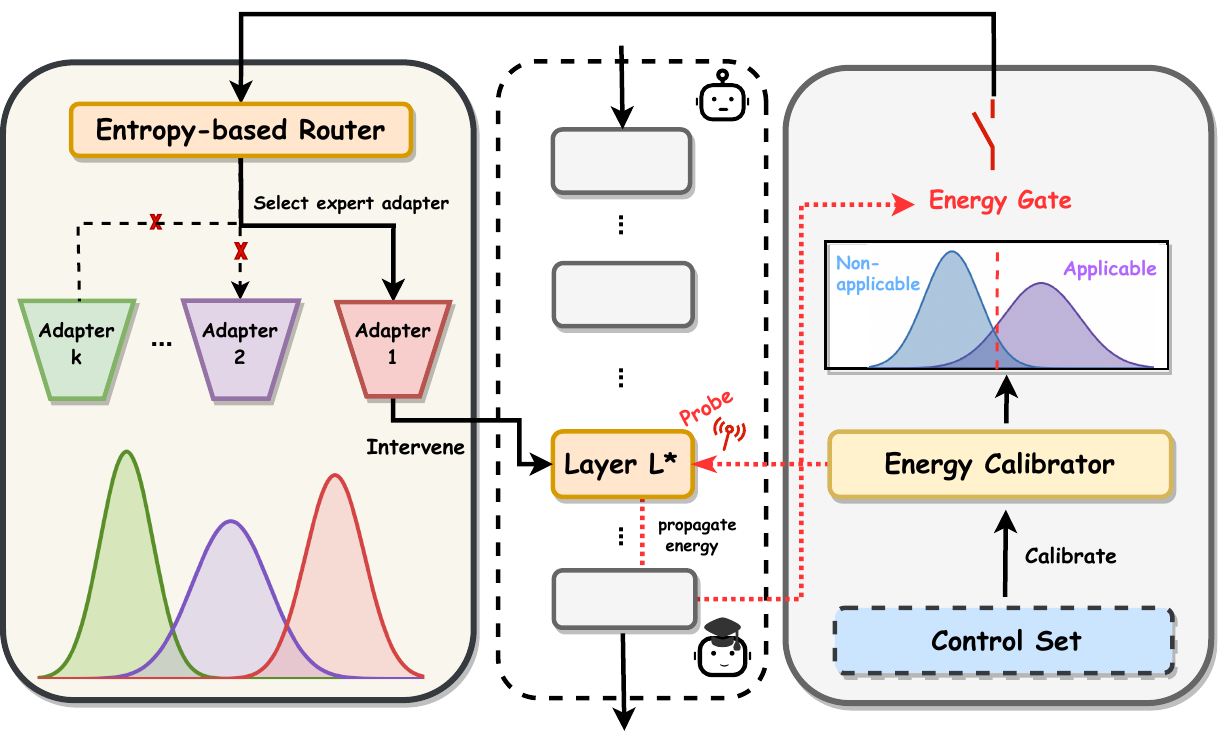}
  \caption{\textbf{Pipeline of MARI.} MARI integrates an Energy-Based Gate that utilizes propagation dynamics to distinguish intervention-applicable inputs from benign ones. Non-applicable inputs bypass the intervention (falling back to the frozen base model), while applicable inputs are directed by an Entropy Router to one of $K$ competitive experts for precise, input-adaptive steering.}
 \captionsetup{skip=6pt}
  \label{fig:eg-mari}
\end{figure}


\subsection{Competitive Multi-Adapters}
\label{subsec:train}
Motivated by Observation I, we introduce \textbf{Competitive Multi-Adapters}:
Instead of committing to one global edit, we train multiple lightweight low-rank adapters at the same injection site and encourage them
to specialize in competitive training and routing, so that the intervention can adapt to heterogeneous requirements. 

We place a set of $K$ ($K\ge 1$) low-rank adapters $\{A_k\}_{k=1}^K$ at a single injection site
$(l^\star,p^\star)$ in a frozen LLM $f_\theta$.
Adapter $A_k$ parameterized an intervention function $\Phi_{\psi_k}$ (cf.\ \cref{subsec:prelim}), which replaces the hidden
state $h:=h^{(l^\star)}_{p^\star}(x)\in\mathbb{R}^d$ with
\begin{equation}
\tilde h \;=\; \Phi_{\psi_k}(h) \;=\; h + \gamma\, s_k\, \Delta_{\psi_k}(h),
\label{eq:act_phi}
\end{equation}
where $\gamma\ge 0$ is a global scale and $s_k\ge 0$ is an optional per-adapter scalar. Unless otherwise stated, we fix $s_k \equiv 1$ and use a global actuation scale $\gamma$ shared across all adapters. 

We instantiate the low-rank update with a rank-$r$ factorization (\ Eq.(3)):
\begin{equation}
\Delta_{\psi_k}(h)\;=\; U_k\big(V_k^\top h + b_k\big),
\qquad U_k,V_k\in\mathbb{R}^{d\times r},\; b_k\in\mathbb{R}^r,
\label{eq:act_lowrank}
\end{equation}
so the intervention output lies in the $r$-dimensional output subspace $\mathrm{span}(U_k)$, while $V_k$ defines an
input projection into an $r$-dimensional bottleneck. Each adapter induces an intervened distribution
$p_{\theta,\psi_k}(\cdot\mid x)$; we train only $\{\psi_k\}_{k=1}^K$ (and optionally $\{s_k\}$), keeping $\theta$ frozen.)

Given a supervised example $(x,y)$, we define a per-adapter loss $\ell_k(x,y)$ that generalizes across both \textbf{multiple-choice} and \textbf{free-form generation} paradigms. Specifically, we employ the cross-entropy loss over candidate options for multiple-choice tasks, and the teacher-forced negative log-likelihood for generation tasks (see Appendix~\ref{app:method_supp}, Eqs.~\ref{eq:per_adapter_ce_mc}--\ref{eq:per_adapter_ce_gen} for formal definitions). To encourage specialization rather than training all adapters toward the same averaged optimum, we perform hard routing
during training (winner-take-gradient): each example is assigned to the adapter that currently achieves the lowest supervised loss,
\begin{equation}
k^\star(x,y)\ :=\ \arg\min_{k\in\{1,\dots,K\}} \ \ell_k(x,y),
\label{eq:hard_route}
\end{equation}

and the task loss is backpropagated only through the winner adapter:
\begin{equation}
\mathcal{L}_{\mathrm{route}}\ :=\ \mathbb{E}_{(x,y)}\big[\ \ell_{k^\star(x,y)}(x,y)\ \big].
\label{eq:routed_loss}
\end{equation}
In practice, we implement Eq.\eqref{eq:routed_loss} by detaching non-winner losses, so gradients from the supervised objective flow only to
$\psi_{k^\star}$.
To prevent mode collapse under hard routing, we use a lightweight minibatch-level usage balancing term.

\noindent\textbf{Entropy routing induces piecewise-affine actuation}
At inference time, the oracle routing in Eq.\eqref{eq:hard_route} is unavailable since $y$ is unknown. We therefore select an adapter using a
parameter-free uncertainty criterion based on predictive entropy. Because all experts share the same frozen backbone and output head, we compute uncertainty scores using a shared softmax temperature (no per-expert temperature or logit scaling), so the entropy $u_k(x)$ are directly comparable across $k$.
 We use the entropy functional
\begin{equation}
H(p)\ :=\ -\sum_{j} p_j \log p_j .
\label{eq:entropy_def}
\end{equation}

We compute an uncertainty score $u_k(x)$ from the predictive distribution under adapter $k$
(e.g., option entropy for multiple-choice, or averaged next-token entropy for generation; see App.~\cref{app:method_supp}, Eqs.~\eqref{eq:entropy_mc}--\eqref{eq:entropy_gen}),
and route by selecting the most confident adapter:

\begin{equation}
\hat{k}(x)\ :=\ \arg\min_{k\in\{1,\ldots,K\}} u_k(x).
\label{eq:entropy_route}
\end{equation}

This routing is training-free, introduces no additional parameters, and keeps $K$ as a simple hyperparameter.

Let $\pi(x)\in\{1,\ldots,K\}$ denote the routing rule, using training-time oracle routing in Eq.\eqref{eq:hard_route} and inference-time entropy routing in
Eq.\eqref{eq:entropy_route}. It induces a partition of the input space $\mathcal{R}_k:=\{x:\pi(x)=k\}$. The overall actuation update can be written as

\begin{equation}
\delta(x)\ :=\ \Delta_{\psi_{\pi(x)}}\!\big(h(x)\big),
\label{eq:act_piecewise_def}
\end{equation}

so that on each region $\mathcal{R}_k$ the intervention reduces to the rank-$r$ affine map in Eq.~\eqref{eq:act_lowrank},
while input-dependence arises solely from $\pi(x)$.

\subsection{Energy-based Gate}
\label{subsec:integrate}
Applying interventions indiscriminately risks over-steering and can harm benign inputs, motivating an input-wise criterion for \textbf{when} to intervene.
Recent work shows that internal model signals can selectively trigger or scale interventions to improve the alignment--utility trade-off. \cite{hedstrom2025mera,ferrando2025dsas}
Motivated by these findings, we introduce an energy-based gate that uses propagation-response dynamics under a small probe update to compute a label-free applicability score, and applies thresholding to trigger actuation only when beneficial, while otherwise falling back to the frozen base model. 

Specifically, fix an injection site $(l^\star,p^\star)$. Let $\{h^{(m)}_{p^\star}(x)\}_{m=l^\star}^{L}$ denote the hidden states of the frozen model on input $x$
(no injection). Given a probe injection of strength $\alpha$, let $\{h^{(\alpha,m)}_{p^\star}(x)\}_{m=l^\star}^{L}$ be the corresponding hidden
states under the probe run. We define the per-layer propagation response
$e_m(x;\alpha)\triangleq \|h^{(\alpha,m)}_{p^\star}(x)-h^{(m)}_{p^\star}(x)\|_2$ (see Eq.~\eqref{eq:prop_curve_def} in App.~\cref{app:method_supp})
and summarize it by a robust statistic

\begin{equation}
E(x;\alpha)\ \triangleq\ \mathrm{median}\Big(\{e_m(x;\alpha)\}_{m=l^\star}^{L}\Big).
\label{eq:prop_energy_def}
\end{equation}

Intuitively, $E(x;\alpha)$ measures how strongly a small injected update propagates through the remaining network; we use it as a proxy for whether applying the full intervention is likely to behave stably on input $x$.

To obtain a comparable and intervention-relevant energy signal, we realize the probe with a low-rank module $g_{\phi}$
(Energy Calibrator), drawn from the same intervention family as \cref{subsec:train} but with a smaller rank $r_{\mathrm{probe}}$.
Unlike the actuation experts, $g_{\phi}$ is trained independently and is used only to generate probe updates (never for generation).
Given $h(x)=h^{(l^\star)}_{p^\star}(x)$, the calibrator produces a probe update
\begin{equation}
\delta_{\phi}(x)\ \triangleq\ g_{\phi}\!\left(h(x)\right),
\label{eq:probe_update}
\end{equation}
and we compute $E(x;\alpha_{\mathrm{probe}})$ by injecting $\alpha_{\mathrm{probe}}\,\delta_{\phi}(x)$ during the probe run.
Using a single dedicated probe avoids coupling control to expert routing and prevents the actuation experts from being constrained to also produce a convenient gating signal.

\noindent\textbf{Threshold gating with base-model fallback.}
We train the probe $g_{\phi}$ on in-field supervision using the same per-example task losses as in \cref{subsec:train}
(Eqs.~\eqref{eq:per_adapter_ce_mc}--\eqref{eq:per_adapter_ce_gen}), but without competitive routing. Let $\ell_{\phi}(x,y)$ denote the task loss
evaluated under the probe-injected model $p_{\theta,\phi}(\cdot\mid x)$. We optimize
\begin{equation}
\mathcal{L}_{\mathrm{cal}}
\;:=\;
\mathbb{E}_{(x,y)\sim\mathcal{D}_{\mathrm{in}}}\!\big[\ell_{\phi}(x,y)\big]
\;+\;
\lambda_{\mathrm{off}}\,\mathcal{R}_{\mathrm{off}},
\label{eq:cal_objective}
\end{equation}
where $\mathcal{R}_{\mathrm{off}} := \mathbb{E}_{x\sim\mathcal{D}^{\mathrm{pca}}_{\mathrm{in}}}\!\big[\mathcal{L}_{\mathrm{off}}(x)\big]$ is an off-subspace regularizer computed on an \emph{unlabeled} subset
$\mathcal{D}^{\mathrm{pca}}_{\mathrm{in}}\subset \mathcal{D}_{\mathrm{in}}$ (inputs only; labels are ignored).

\noindent\textbf{Off-subspace penalty improves separability.}
We construct an in-field calibration subspace at the injection layer by fitting a rank-$k$ PCA basis $B$ on $\mathcal{D}^{\mathrm{pca}}_{\mathrm{in}}$ and
penalize the probe component outside this subspace:

\begin{equation}
\mathcal{L}_{\mathrm{off}}(x)\ \triangleq\
\left\|\Pi_{B}^{\perp}\!\big(\delta_{\phi}(x)\big)\right\|_2^2.
\label{eq:offsub_penalty}
\end{equation}

(Definitions of $\Pi_B^\perp$ and the PCA construction are standard; we include the exact operators in App.~\cref{app:method_supp}, Eq.~\eqref{eq:proj_ops}.)
This shapes the probe to produce in-field-compatible updates at the injection layer and empirically improves the separability of $E(\cdot)$ between inputs where actuation is applicable vs.\ non-applicable.

We convert energy into a binary applicability decision by calibrating a threshold $\tau_E$ on a small held-out control set
$\mathcal{D}_{\mathrm{ctrl}}$ that contains both applicable and non-applicable inputs. We compute the energy scores
$\{E(x;\alpha_{\mathrm{probe}})\}_{x\in\mathcal{D}_{\mathrm{ctrl}}}$ and choose $\tau_E$ by a fixed operating rule:
let $\mathcal{D}^{\mathrm{non}}_{\mathrm{ctrl}}\subset \mathcal{D}_{\mathrm{ctrl}}$ denote the non-applicable portion, and pick
$\tau_E$ as the $(1-\rho)$-quantile of $\{E(x;\alpha_{\mathrm{probe}})\}_{x\in\mathcal{D}^{\mathrm{non}}_{\mathrm{ctrl}}}$, so that
$\Pr_{x\sim \mathcal{D}^{\mathrm{non}}_{\mathrm{ctrl}}}\!\big[E(x;\alpha_{\mathrm{probe}})\ge \tau_E\big]\approx \rho$
(i.e., we reject/intercept a target fraction $\rho$ of non-applicable inputs). In our experiments, $\rho$ is a hyperparameter, and we typically set $\rho=0.9$.

At inference time, we set the actuation strength via
\begin{equation}
\alpha(x)\ :=\
\begin{cases}
\alpha_{\mathrm{full}}, & E(x;\alpha_{\mathrm{probe}})\ge \tau_E \quad \text{(applicable)},\\
\alpha_{\mathrm{safe}}, & E(x;\alpha_{\mathrm{probe}})< \tau_E \quad \text{(non-applicable)},
\end{cases}
\label{eq:energy_gate_rule}
\end{equation}
where we typically set $\alpha_{\mathrm{safe}}=0$ (suppress actuation), falling back to the frozen base model. When $\alpha(x)=\alpha_{\mathrm{full}}$,

\vspace{-0.1in}
\section{Theoretical Analysis}
\label{sec:theory}

Detailed definitions, auxiliary lemmas, assumptions, propositions, and proofs are deferred to Appendix~\ref{app:theory}.

\begin{theorem}[\textbf{Risk Bound and Improvement Condition}]
\label{thm:entropy_vs_single}
Let loss $\ell_k: \mathcal{X} \times \mathcal{Y} \to [0, L]$ be bounded. 
For each example $(x,y)$, let the oracle expert be $k^\star(x,y):=\arg\min_k \ell_k(x,y)$ and the entropy-selected expert be $\hat{k}(x):=\arg\min_k u_k(x)$, where $u_k(x)$ is the entropy-based uncertainty score used at inference. 
Define the oracle risk $R_{\min} \triangleq \mathbb{E}[\ell_{k^\star}]$ and the misrouting rate $\eta \triangleq \Pr(\hat{k} \neq k^\star)$. 
The risk of entropy routing, $R_{\mathrm{ent}}$, satisfies:

\begin{equation}
R_{\mathrm{ent}} \ \le\ R_{\min} + L \cdot \eta.
\label{eq:entropy_misroute_bound}
\end{equation}

Defining the specialization gain $\Delta_{\mathrm{spec}} \triangleq R_{\mathrm{single}} - R_{\min}$, we derive the sufficiency condition for improvement:
\begin{equation}
\Delta_{\mathrm{spec}} > L \cdot \eta \implies R_{\mathrm{ent}} < R_{\mathrm{single}}.
\end{equation}

\end{theorem}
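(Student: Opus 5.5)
The plan is to compare the routed loss with the oracle loss pointwise, using only the boundedness of the losses, and then take expectations. Here $R_{\mathrm{ent}} \triangleq \mathbb{E}[\ell_{\hat{k}(x)}(x,y)]$ is the risk of entropy routing, and $R_{\mathrm{single}}$ is the risk of the single-adapter baseline. The expectations are over the joint distribution of $(x,y)$.

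\textbf{Pointwise bound.} Fix an example $(x,y)$ and write
\[
\ell_{\hat{k}(x)}(x,y) = \ell_{k^\star(x,y)}(x,y) + \bigl(\ell_{\hat{k}(x)}(x,y) - \ell_{k^\star(x,y)}(x,y)\bigr).
\]
The difference term vanishes on the event $\{\hat{k}=k^\star\}$. On the complementary event it is nonnegative, since $k^\star$ minimizes $\ell_k(x,y)$ over $k$. It is also at most $L - 0 = L$, because every loss takes values in $[0,L]$. This gives the pointwise inequality
\[
\ell_{\hat{k}} \le \ell_{k^\star} + L\cdot\mathbf{1}[\hat{k}\neq k^\star].
\]
Ties in either $\arg\min$ are broken by a fixed deterministic rule, so both $\hat{k}$ and $k^\star$ are measurable functions and the indicator is well defined.

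\textbf{First claim.} Taking expectations of the pointwise inequality and using linearity gives
\[
R_{\mathrm{ent}} \le R_{\min} + L\,\Pr(\hat{k}\neq k^\star) = R_{\min} + L\eta,
\]
which is \eqref{eq:entropy_misroute_bound}. The only care needed is integrability, and bounded losses guarantee it.

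\textbf{Improvement condition.} Substitute $R_{\min} = R_{\mathrm{single}} - \Delta_{\mathrm{spec}}$ into the bound to get $R_{\mathrm{ent}} \le R_{\mathrm{single}} - (\Delta_{\mathrm{spec}} - L\eta)$. When $\Delta_{\mathrm{spec}} > L\eta$, the subtracted term is strictly positive, so $R_{\mathrm{ent}} < R_{\mathrm{single}}$.

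\textbf{Main obstacle.} This is not technical but definitional: the bound needs $\hat{k}$, $k^\star$ and $R_{\mathrm{single}}$ to refer to the same expert family and the same data distribution. I would state explicitly that $\eta$ and all three risks are evaluated on the test distribution. I would also note that $\eta$ is random only through $(x,y)$, which holds because entropy routing is parameter-free (\cref{subsec:train}). The remaining steps are routine.
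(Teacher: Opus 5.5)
Your proposal is correct and is essentially the paper's proof: the paper also writes $R_{\mathrm{ent}}-R_{\min}=\mathbb{E}[(\ell_{\hat{k}}-\ell_{k^\star})\mathbf{1}\{\hat{k}\neq k^\star\}]$, bounds the integrand by $L$ on the misrouting event using $\ell_k\in[0,L]$, and derives the improvement condition by the same substitution $R_{\min}=R_{\mathrm{single}}-\Delta_{\mathrm{spec}}$. Your remarks on tie-breaking and integrability are harmless additions that the paper leaves implicit; note that the paper defines $R_{\mathrm{single}}:=\min_j\mathbb{E}[\ell_j]$, but the final implication is pure algebra and holds for any choice of $R_{\mathrm{single}}$.
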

\begin{boxedremark}
Eq.~\eqref{eq:entropy_misroute_bound} makes the routing--specialization trade-off explicit: gains come from $\Delta_{\mathrm{spec}}$, while imperfect routing incurs a penalty proportional to $\eta$.
EG-MARI increases $\Delta_{\mathrm{spec}}$ via competitive specialization and keeps $\eta$ small via a calibrated, label-free uncertainty criterion at inference.
Since the competitive loss is on a shared scale across experts, the single-adapter entropy-to-correctness accuracy admits a consistent lower bound; see Appendix~\ref{app:routing_eta}.
\end{boxedremark}

\begin{theorem}[\textbf{Upper Bound on Non-Applicable Energy}]
\label{thm:nonapp_small_energy}
Consider the propagation energy $E(x;\alpha)$ and the in-field subspace $B$.
Suppose the probe $\delta_\phi$ satisfies the alignment constraint $\|\Pi_{B}^{\perp}\delta_\phi\|_2 \le \varepsilon$ and bounded magnitude $\|\Pi_{B}\delta_\phi\|_2 \le S$.
Assume that for non-applicable inputs $x \sim \mathcal{D}_{\mathrm{non}}$, the downstream propagation gain along $B$ is attenuated by $\kappa_{\mathrm{non}}$ (Assumption C.8).
Under a first-order sensitivity model, the energy is bounded by:

\begin{equation}
E(x;\alpha) \ \le\ \alpha \cdot \underbrace{\left( \kappa_{\mathrm{non}} S + \Gamma(x)\varepsilon \right)}_{\triangleq C_{\mathrm{non}}(x)} \ +\ o(\alpha),
\label{eq:energy_nonapp_compact_main}
\end{equation}

where $\Gamma(x)$ denotes the spectral norm of the Jacobian restricted to the orthogonal complement $B^\perp$. 
Thus, $E(x;\alpha)$ vanishes linearly as $\alpha \to 0$, with a coefficient determined by in-field attenuation $\kappa_{\mathrm{non}}$ and the probe's
off-subspace leakage $\varepsilon$.
\end{theorem}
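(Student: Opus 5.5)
The plan is a first-order Taylor expansion of the downstream map from the injection layer to each later layer, followed by splitting the probe into its in-subspace and off-subspace components.

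\textbf{Setup.} For each $m\in\{l^\star,\dots,L\}$, let $F_m$ be the (frozen) map that sends the hidden state at $(l^\star,p^\star)$ to the hidden state at $(m,p^\star)$, with all other positions held at their base-run values. Then
\[
h^{(\alpha,m)}_{p^\star}(x)-h^{(m)}_{p^\star}(x)=F_m\big(h+\alpha\delta_\phi\big)-F_m(h).
\]
Under the first-order sensitivity model this equals $J_m(x)\,\alpha\delta_\phi+o(\alpha)$, where $J_m(x)=\partial F_m/\partial h$. This assumes $F_m$ is differentiable at $h$, which is the content of the sensitivity model.

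\textbf{Splitting the probe.} Decompose $\delta_\phi=\Pi_B\delta_\phi+\Pi_B^\perp\delta_\phi$. The triangle inequality gives
\[
e_m(x;\alpha)\le \alpha\big(\|J_m\Pi_B\|\,\|\Pi_B\delta_\phi\|_2+\|J_m\Pi_B^\perp\|\,\|\Pi_B^\perp\delta_\phi\|_2\big)+o(\alpha).
\]
I would then bound the two terms separately.
\begin{itemize}
\item For the in-subspace term, invoke Assumption C.8: for non-applicable inputs, $\|J_m(x)\Pi_B\|\le\kappa_{\mathrm{non}}$ uniformly in $m$. Together with $\|\Pi_B\delta_\phi\|_2\le S$, this term is at most $\kappa_{\mathrm{non}}S$.
\item For the leakage term, define $\Gamma(x):=\max_m\|J_m(x)\Pi_B^\perp\|$, the spectral norm restricted to $B^\perp$, taken uniformly over layers. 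Together with $\|\Pi_B^\perp\delta_\phi\|_2\le\varepsilon$, this term is at most $\Gamma(x)\varepsilon$.
\end{itemize}
This yields $e_m(x;\alpha)\le\alpha\,C_{\mathrm{non}}(x)+o(\alpha)$ for every $m$.

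\textbf{Passing to the median.} The median of a finite set is at most its maximum. The number of layers is finite, so the $o(\alpha)$ remainders can be combined into a single $o(\alpha)$. Hence $E(x;\alpha)\le\max_m e_m(x;\alpha)\le \alpha C_{\mathrm{non}}(x)+o(\alpha)$. The linear-vanishing claim then follows by dividing by $\alpha$ and letting $\alpha\to0$.

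\textbf{Main obstacle.} The delicate step is making the first-order model precise. The probe $\delta_\phi=g_\phi(h)$ depends only on $h$ and not on $\alpha$, so the expansion is a directional derivative of $F_m$ along a fixed vector, and no $\alpha$-dependence enters through the probe. The harder part is interpreting Assumption C.8 as a per-layer bound. If C.8 instead bounds only an aggregate gain, I would have to apply it to the median directly, or to a suitable quantile of layers, rather than to the maximum. That alternative still suffices because the median is at most any upper bound that holds for more than half of the layers.
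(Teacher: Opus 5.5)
Your proposal is correct and follows essentially the same route as the paper: the first-order expansion $e_m(x;\alpha)=\alpha\|J_m(x)\delta_\phi(x)\|_2+o(\alpha)$, the split $\delta_\phi=\Pi_B\delta_\phi+\Pi_B^\perp\delta_\phi$ with the triangle inequality and restricted operator norms maximized over layers, and finally $E(x;\alpha)\le\max_m e_m(x;\alpha)$. The paper reads Assumption C.8 exactly as your per-layer bound $\kappa_B(x):=\max_m\|J_m(x)\Pi_B\|_{2\to 2}\le\kappa_{\mathrm{non}}$, so your fallback for an aggregate-gain version is not needed.
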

\begin{boxedremark}
Eq.~\eqref{eq:energy_nonapp_compact_main} decomposes low non-applicable energy into two factors:
(i) \emph{subspace alignment} (small $\varepsilon$) reduces the $B^\perp$ leakage term weighted by $\Gamma(x)$, and
(ii) \emph{attenuated in-field propagation} (small $\kappa_{\mathrm{non}}$) weakens downstream response along $B$.
Together, small $\kappa_{\mathrm{non}}$ and $\varepsilon$ tighten the bound, making non-applicable inputs tend to yield low energy.
\end{boxedremark}

\begin{table*}[t!]
\centering
\scriptsize
\caption{Evaluation results on alignment benchmarks (TruthfulQA, BBQ, Refusal) and general capabilities (MMLU, ARC). Results are reported as mean$_{\pm \text{std}}$. \textbf{MARI} achieves the best trade-off, consistently ranking first or second across all metrics. The best results are \textbf{bolded} and the second-best \underline{underlined}.}
\resizebox{\textwidth}{!}{%
\begin{tabular}{lcccc ccc}
\hline
\multirow{2}{*}{\textbf{Method}} & \multicolumn{2}{c}{\textbf{TruthfulQA}} & \textbf{Bias} & \textbf{Safety} & \multicolumn{3}{c}{\textbf{General Capabilities}} \\
 & MC1~($\uparrow$) & MC2~($\uparrow$) & BBQ~($\uparrow$) & Refusal~($\uparrow$) & MMLU~($\uparrow$) & ARC-E~($\uparrow$) & ARC-C~($\uparrow$) \\
\hline

\multicolumn{8}{c}{\cellcolor{blue!5}\textit{Llama-2-7B}} \\
\hline
Vanilla & $32.03_{\pm 0.21}$ & $49.51_{\pm 0.28}$ & $0.329_{\pm 0.011}$ & $0.821_{\pm 0.021}$ & $23.3_{\pm 0.3}$ & $51.9_{\pm 0.4}$ & $33.8_{\pm 0.3}$ \\
CAA     & $33.80_{\pm 0.18}$ & $46.90_{\pm 0.23}$ & $0.515_{\pm 0.009}$ & $0.830_{\pm 0.018}$ & $23.1_{\pm 0.4}$ & $\underline{52.2}_{\pm 0.3}$ & $33.5_{\pm 0.4}$ \\
ITI     & $30.81_{\pm 0.20}$ & $49.80_{\pm 0.29}$ & $0.520_{\pm 0.012}$ & $0.823_{\pm 0.015}$ & $\textbf{23.6}_{\pm 0.3}$ & $51.7_{\pm 0.5}$ & $\textbf{34.1}_{\pm 0.3}$ \\
NL-ITI  & $31.30_{\pm 0.25}$ & $49.38_{\pm 0.26}$ & $0.522_{\pm 0.013}$ & $0.840_{\pm 0.024}$ & $\underline{23.4}_{\pm 0.2}$ & $52.0_{\pm 0.4}$ & $33.7_{\pm 0.3}$ \\
ReFT    & $\underline{50.46}_{\pm 0.33}$ & $\underline{50.20}_{\pm 0.31}$ & $\underline{0.540}_{\pm 0.015}$ & $\underline{0.845}_{\pm 0.020}$ & $23.2_{\pm 0.4}$ & $51.8_{\pm 0.3}$ & $\underline{34.0}_{\pm 0.4}$ \\
\rowcolor{gray!20} \textbf{MARI} & $\textbf{64.35}_{\pm 0.12}$ & $\textbf{61.13}_{\pm 0.15}$ & $\textbf{0.751}_{\pm 0.008}$ & $\textbf{0.851}_{\pm 0.011}$ & $23.2_{\pm 0.2}$ & $\textbf{52.6}_{\pm 0.3}$ & $33.5_{\pm 0.2}$ \\
\hline

\multicolumn{8}{c}{\cellcolor{blue!5}\textit{Llama-3-8B}} \\
\hline
Vanilla & $28.70_{\pm 0.21}$ & $45.03_{\pm 0.28}$ & $0.608_{\pm 0.011}$ & $0.851_{\pm 0.021}$ & $65.9_{\pm 0.3}$ & $77.3_{\pm 0.4}$ & $51.4_{\pm 0.3}$ \\
CAA     & $29.64_{\pm 0.18}$ & $46.95_{\pm 0.23}$ & $0.635_{\pm 0.009}$ & $0.854_{\pm 0.018}$ & $65.5_{\pm 0.2}$ & $\underline{77.7}_{\pm 0.5}$ & $51.1_{\pm 0.4}$ \\
ITI     & $35.45_{\pm 0.20}$ & $\underline{53.95}_{\pm 0.29}$ & $0.612_{\pm 0.012}$ & $0.813_{\pm 0.015}$ & $\underline{66.1}_{\pm 0.4}$ & $77.2_{\pm 0.3}$ & $\underline{51.7}_{\pm 0.3}$ \\
NL-ITI  & $36.19_{\pm 0.25}$ & $53.12_{\pm 0.26}$ & $0.625_{\pm 0.013}$ & $0.820_{\pm 0.024}$ & $65.8_{\pm 0.3}$ & $77.4_{\pm 0.4}$ & $51.3_{\pm 0.3}$ \\
ReFT    & $\underline{50.58}_{\pm 0.33}$ & $52.51_{\pm 0.31}$ & $\underline{0.637}_{\pm 0.015}$ & $\underline{0.861}_{\pm 0.020}$ & $66.0_{\pm 0.3}$ & $77.1_{\pm 0.5}$ & $51.6_{\pm 0.4}$ \\
\rowcolor{gray!20} \textbf{MARI} & $\textbf{61.81}_{\pm 0.14}$ & $\textbf{67.39}_{\pm 0.18}$ & $\textbf{0.792}_{\pm 0.007}$ & $\textbf{0.866}_{\pm 0.010}$ & $\textbf{66.6}_{\pm 0.2}$ & $\textbf{78.0}_{\pm 0.2}$ & $\textbf{52.1}_{\pm 0.3}$ \\
\hline

\multicolumn{8}{c}{\cellcolor{blue!5}\textit{Llama-2-13B}} \\
\hline
Vanilla & $32.76_{\pm 0.22}$ & $50.75_{\pm 0.27}$ & $0.333_{\pm 0.012}$ & $0.714_{\pm 0.022}$ & $26.0_{\pm 0.3}$ & $53.7_{\pm 0.4}$ & $42.6_{\pm 0.3}$ \\
CAA     & $33.50_{\pm 0.19}$ & $51.90_{\pm 0.24}$ & $0.628_{\pm 0.010}$ & $0.720_{\pm 0.019}$ & $25.7_{\pm 0.4}$ & $\textbf{53.9}_{\pm 0.5}$ & $42.3_{\pm 0.4}$ \\
ITI     & $35.45_{\pm 0.23}$ & $\underline{53.95}_{\pm 0.30}$ & $\underline{0.712}_{\pm 0.014}$ & $0.265_{\pm 0.028}$ & $\underline{26.3}_{\pm 0.3}$ & $53.5_{\pm 0.4}$ & $\textbf{42.9}_{\pm 0.3}$ \\
NL-ITI  & $36.19_{\pm 0.26}$ & $53.12_{\pm 0.28}$ & $0.630_{\pm 0.011}$ & $0.715_{\pm 0.025}$ & $26.1_{\pm 0.2}$ & $\underline{53.8}_{\pm 0.3}$ & $42.5_{\pm 0.4}$ \\
ReFT    & $\underline{59.80}_{\pm 0.35}$ & $55.60_{\pm 0.32}$ & $0.678_{\pm 0.016}$ & $\underline{0.728}_{\pm 0.021}$ & $25.9_{\pm 0.3}$ & $53.6_{\pm 0.4}$ & $\underline{42.8}_{\pm 0.3}$ \\
\rowcolor{gray!20} \textbf{MARI} & $\textbf{66.67}_{\pm 0.16}$ & $\textbf{64.96}_{\pm 0.19}$ & $\textbf{0.784}_{\pm 0.009}$ & $\textbf{0.738}_{\pm 0.012}$ & $\textbf{26.7}_{\pm 0.2}$ & $53.4_{\pm 0.3}$ & $42.4_{\pm 0.2}$ \\
\hline

\multicolumn{8}{c}{\cellcolor{orange!10}\textit{Qwen2-7B}} \\
\hline
Vanilla & $26.75_{\pm 0.20}$ & $48.31_{\pm 0.29}$ & $0.638_{\pm 0.011}$ & $0.631_{\pm 0.019}$ & $72.4_{\pm 0.3}$ & $75.5_{\pm 0.4}$ & $51.9_{\pm 0.3}$ \\
CAA     & $28.44_{\pm 0.17}$ & $49.25_{\pm 0.25}$ & $0.665_{\pm 0.008}$ & $0.644_{\pm 0.017}$ & $72.1_{\pm 0.4}$ & $\textbf{75.8}_{\pm 0.5}$ & $51.6_{\pm 0.4}$ \\
ITI     & $31.12_{\pm 0.22}$ & $52.32_{\pm 0.31}$ & $0.668_{\pm 0.013}$ & $0.635_{\pm 0.021}$ & $\underline{72.6}_{\pm 0.3}$ & $75.3_{\pm 0.4}$ & $\underline{52.2}_{\pm 0.3}$ \\
NL-ITI  & $32.84_{\pm 0.24}$ & $52.33_{\pm 0.27}$ & $\underline{0.683}_{\pm 0.010}$ & $0.633_{\pm 0.023}$ & $72.5_{\pm 0.2}$ & $\underline{75.6}_{\pm 0.3}$ & $51.8_{\pm 0.4}$ \\
ReFT    & $\underline{49.83}_{\pm 0.34}$ & $\underline{59.69}_{\pm 0.30}$ & $0.678_{\pm 0.015}$ & $\underline{0.646}_{\pm 0.018}$ & $72.3_{\pm 0.4}$ & $75.4_{\pm 0.5}$ & $52.1_{\pm 0.3}$ \\
\rowcolor{gray!20} \textbf{MARI} & $\textbf{65.74}_{\pm 0.11}$ & $\textbf{68.84}_{\pm 0.14}$ & $\textbf{0.784}_{\pm 0.006}$ & $\textbf{0.655}_{\pm 0.010}$ & $\textbf{73.1}_{\pm 0.2}$ & $75.2_{\pm 0.2}$ & $\textbf{52.3}_{\pm 0.2}$ \\
\hline

\multicolumn{8}{c}{\cellcolor{orange!10}\textit{Qwen2.5-14B}} \\
\hline
Vanilla & $39.81_{\pm 0.25}$ & $60.71_{\pm 0.28}$ & $0.648_{\pm 0.011}$ & $0.626_{\pm 0.020}$ & $80.9_{\pm 0.3}$ & $84.7_{\pm 0.4}$ & $63.4_{\pm 0.3}$ \\
CAA     & $40.44_{\pm 0.21}$ & $62.25_{\pm 0.24}$ & $0.655_{\pm 0.009}$ & $0.634_{\pm 0.018}$ & $80.6_{\pm 0.4}$ & $\underline{85.0}_{\pm 0.5}$ & $63.1_{\pm 0.4}$ \\
ITI     & $42.20_{\pm 0.26}$ & $63.65_{\pm 0.32}$ & $0.657_{\pm 0.012}$ & $0.632_{\pm 0.022}$ & $\underline{81.2}_{\pm 0.3}$ & $84.5_{\pm 0.4}$ & $\underline{63.8}_{\pm 0.3}$ \\
NL-ITI  & $46.84_{\pm 0.28}$ & $\underline{68.33}_{\pm 0.29}$ & $\underline{0.673}_{\pm 0.011}$ & $\underline{0.640}_{\pm 0.023}$ & $81.0_{\pm 0.2}$ & $84.8_{\pm 0.3}$ & $63.3_{\pm 0.4}$ \\
ReFT    & $\underline{52.33}_{\pm 0.36}$ & $62.80_{\pm 0.34}$ & $0.646_{\pm 0.015}$ & $0.636_{\pm 0.020}$ & $80.8_{\pm 0.4}$ & $84.6_{\pm 0.5}$ & $63.6_{\pm 0.3}$ \\
\rowcolor{gray!20} \textbf{MARI} & $\textbf{67.93}_{\pm 0.13}$ & $\textbf{71.92}_{\pm 0.16}$ & $\textbf{0.821}_{\pm 0.007}$ & $\textbf{0.645}_{\pm 0.011}$ & $\textbf{81.6}_{\pm 0.2}$ & $\textbf{85.4}_{\pm 0.2}$ & $\textbf{64.1}_{\pm 0.2}$ \\
\hline

\multicolumn{8}{c}{\cellcolor{orange!10}\textit{Qwen2.5-32B}} \\
\hline
Vanilla & $43.98_{\pm 0.24}$ & $59.81_{\pm 0.29}$ & $0.736_{\pm 0.012}$ & $0.621_{\pm 0.021}$ & $83.5_{\pm 0.3}$ & $77.8_{\pm 0.4}$ & $59.3_{\pm 0.3}$ \\
CAA     & $44.77_{\pm 0.20}$ & $60.05_{\pm 0.26}$ & $0.746_{\pm 0.009}$ & $\textbf{0.663}_{\pm 0.018}$ & $83.2_{\pm 0.4}$ & $\underline{78.1}_{\pm 0.5}$ & $59.0_{\pm 0.4}$ \\
ITI     & $46.78_{\pm 0.28}$ & $64.43_{\pm 0.33}$ & $0.755_{\pm 0.013}$ & $0.645_{\pm 0.022}$ & $\underline{83.7}_{\pm 0.3}$ & $77.6_{\pm 0.4}$ & $\underline{59.6}_{\pm 0.3}$ \\
NL-ITI  & $47.92_{\pm 0.29}$ & $\underline{66.23}_{\pm 0.30}$ & $0.772_{\pm 0.011}$ & $0.642_{\pm 0.024}$ & $83.6_{\pm 0.2}$ & $77.9_{\pm 0.3}$ & $59.2_{\pm 0.4}$ \\
ReFT    & $\underline{55.60}_{\pm 0.39}$ & $48.90_{\pm 0.35}$ & $\underline{0.821}_{\pm 0.016}$ & $0.648_{\pm 0.019}$ & $83.4_{\pm 0.4}$ & $77.7_{\pm 0.5}$ & $59.5_{\pm 0.3}$ \\
\rowcolor{gray!20} \textbf{MARI} & $\textbf{81.94}_{\pm 0.15}$ & $\textbf{70.59}_{\pm 0.18}$ & $\textbf{0.876}_{\pm 0.008}$ & $\underline{0.653}_{\pm 0.010}$ & $\textbf{84.2}_{\pm 0.2}$ & $\textbf{78.5}_{\pm 0.2}$ & $\textbf{60.0}_{\pm 0.2}$ \\
\hline
\end{tabular}}
\label{tab:main-results}
\end{table*}

\begin{table*}[t!]
\scriptsize
\centering
\setlength{\tabcolsep}{2.5mm}
\caption{\textbf{Ablation Study.} We analyze the contribution of key components in EG-MARI. 
Note: While removing Energy Gating often yields higher alignment scores, it causes severe regression in general capabilities. EG-MARI offers the best balance.}
\resizebox{\textwidth}{!}{%
\begin{tabular}{lcccc ccc}
\hline
\multirow{2}{*}{\textbf{Method}} & \multicolumn{2}{c}{\textbf{TruthfulQA}} & \textbf{Bias} & \textbf{Safety} & \multicolumn{3}{c}{\textbf{General Capabilities}} \\
 & MC1~($\uparrow$) & MC2~($\uparrow$) & BBQ~($\uparrow$) & Refusal~($\uparrow$) & MMLU~($\uparrow$) & ARC-E~($\uparrow$) & ARC-C~($\uparrow$) \\
\hline

\multicolumn{8}{c}{\cellcolor{blue!5}\textit{Llama-3-8B}} \\
\hline
Vanilla            & 28.70 & 45.03 & 0.608 & 0.851 & 65.9 & 77.3 & 51.4 \\
w/o Energy Gating  & \textbf{65.15} & \textbf{68.20} & \textbf{0.800} & \underline{0.860} & 57.5 & 70.2 & 44.8 \\
w/o Multi-Adapter  & 45.80 & 55.40 & 0.680 & 0.831 & \underline{66.2} & \underline{77.8} & \underline{51.8} \\
\rowcolor{gray!20} \textbf{EG-MARI }
                   & \underline{61.81} & \underline{67.39} & \underline{0.792} & \textbf{0.866} & \textbf{66.6} & \textbf{78.0} & \textbf{52.1} \\
\hline

\multicolumn{8}{c}{\cellcolor{orange!10}\textit{Qwen2.5-14B}} \\
\hline
Vanilla            & 39.81 & 60.71 & 0.648 & 0.626 & 80.9 & 84.7 & 63.4 \\
w/o Energy Gating  & \underline{71.50} & \underline{70.10} & \textbf{0.828} & \textbf{0.665} & 74.5 & 78.1 & 58.2 \\
w/o Multi-Adapter  & 52.30 & 67.80 & 0.640 & 0.620 & \underline{81.2} & \underline{85.1} & \underline{63.9} \\
\rowcolor{gray!20} \textbf{EG-MARI }
                   & \textbf{67.93} & \textbf{71.92} & \underline{0.821} & \underline{0.645} & \textbf{81.6} & \textbf{85.4} & \textbf{64.1} \\
\hline

\multicolumn{8}{c}{\cellcolor{orange!10}\textit{Qwen2.5-32B}} \\
\hline
Vanilla            & 43.98 & 59.81 & 0.736 & 0.621 & 83.5 & 77.8 & 59.3 \\
w/o Energy Gating  & \textbf{82.10} & \textbf{73.80} & \underline{0.875} & \textbf{0.675} & 76.2 & 72.4 & 52.5 \\
w/o Multi-Adapter  & 55.60 & 58.90 & 0.640 & 0.635 & \underline{83.9} & \underline{78.2} & \underline{59.8} \\
\rowcolor{gray!20} \textbf{EG-MARI }
                   & \underline{81.94} & \underline{70.59} & \textbf{0.876} & \underline{0.653} & \textbf{84.2} & \textbf{78.5} & \textbf{60.0} \\
\hline
\end{tabular}}
\label{tab:ablation}
\end{table*}

\vspace{-10pt}
\section{Experiments}
\subsection{Experimental Settings}
Detailed information regarding the dataset, evaluation protocols, baseline, and implementation settings is provided in Appendix~\ref{app:settings}.

\noindent\textbf{Models.} We conduct experiments across a diverse set of state-of-the-art open-weights LLMs, covering different model families and parameter scales ranging from 7B to 32B. Specifically, we examine the Llama family, including Llama-2 (7B, 13B)~\citep{touvron2023llama} and Llama-3-8B~\citep{grattafiori2024llama}, as well as the Qwen family, including Qwen2-7B~\citep{yang2024qwen2} and Qwen2.5 models (14B, 32B)~\citep{yang2024qwen25}.

\noindent\textbf{Datasets.} We evaluate alignment efficacy on three widely used benchmarks: \textbf{TruthfulQA}~\citep{lin-etal-2022-truthfulqa} for truthfulness, \textbf{BBQ}~\citep{parrish-etal-2022-bbq} for social bias, and \textbf{Sorry-Bench}~\citep{xie2025sorrybench} for safety refusal. Additionally, to verify the preservation of general capabilities, we utilize \textbf{MMLU}~\citep{hendrycks2020measuring}, \textbf{ARC-Easy}, and \textbf{ARC-Challenge}~\citep{clark2018think}.

\noindent\textbf{Metrics.} For TruthfulQA, we report MC1 and MC2 accuracies for multiple-choice tasks. For BBQ and general benchmarks (MMLU, ARC), we report standard Accuracy. For safety, we measure the Refusal Rate (Sorry-rate) using a model-based judge.

\noindent\textbf{Baselines.} We compare MARI against state-of-the-art representation intervention paradigms, including \textbf{CAA}~\citep{rimsky2024steering}, \textbf{ITI}~\citep{li2023iti}, \textbf{NL-ITI}~\citep{hoscilowicz2024nonlinear}, and \textbf{ReFT} ~\citep{wu2024reft}.


\subsection{Main Results}
\label{sec:main_results}

We present the comprehensive evaluation results in Table~\ref{tab:main-results}. We have the following observations:

\noindent\textbf{State-of-the-art alignment performance.}
Our method consistently outperforms existing representation intervention baselines across all evaluated metrics. On TruthfulQA, MARI achieves substantial gains, significantly surpassing prior approaches. Simultaneously, for Safety (Refusal) and Bias (BBQ) benchmarks, our method effectively preserves or even improves performance, avoiding the degradation commonly observed in baseline methods. We attribute these gains to our Competitive Multi-Adapter mechanism, which resolves the directional conflicts inherent in static interventions by utilizing specialized experts for precise steering.

\noindent\textbf{Preservation of general capabilities.}
Unlike prior methods that may degrade general performance due to indiscriminate intervention,
MARI preserves and often improves general capabilities by activating edits only when inputs are intervention-applicable.
This behavior is governed by our \emph{Energy-Based Gating} module, which leverages propagation-response dynamics to suppress unnecessary interventions on non-applicable inputs.
The gating threshold $\tau_E$ is calibrated once on a small, disjoint control set combining an intervention-attribute dataset with a subset of a general-capability dataset, with both sources providing applicable and non-applicable examples, and then kept fixed across all benchmarks to enable consistent gains without over-intervention.
Full construction, labeling, and default PCA/probe ranks are given in Appendix~\ref{app:exp_setup}.

\noindent\textbf{Consistent effectiveness across model scales and generations.}
The improvements achieved by MARI are robust across diverse model generations and parameter sizes. Notably, our method demonstrates effective scalability with model capacity, yielding significantly more pronounced gains on larger models compared to smaller ones. We hypothesize that this scalability stems from the richer latent knowledge encoded in high-parameter models, which possess more structured representation spaces that allow our method to leverage feature separability more effectively.


\subsection{Ablation Study}
\label{sec:ablation}

We conduct an ablation study across models of different parameter sizes, results summarized in Table~\ref{tab:ablation}.

\noindent\textbf{(1) w/o Energy Gating.} Removing the energy-based control implies that the intervention is applied indiscriminately to all inputs without distinction. We observe that this yields slightly higher scores on specific alignment metrics. However, this marginal gain comes at the cost of a degradation in general capabilities. 

\noindent\textbf{(2) w/o Multi-Adapter.} Replacing the competitive multi-adapter strategy with a single adapter leads to a drastic decline in alignment efficacy. In contrast, the multi-adapter mechanism achieves superior performance by leveraging competitive experts to adaptively capture the heterogeneous intervention requirements across diverse samples.

\subsection{Hyperparameter Analysis}
\label{sec:hyperparam}

We analyze the sensitivity of MARI to two key hyperparameters: the number of competitive adapters $K$ and the target rejection rate $\rho$ for the energy gate. 


\begin{figure}[h!]
  \centering
  \includegraphics[width=\columnwidth]{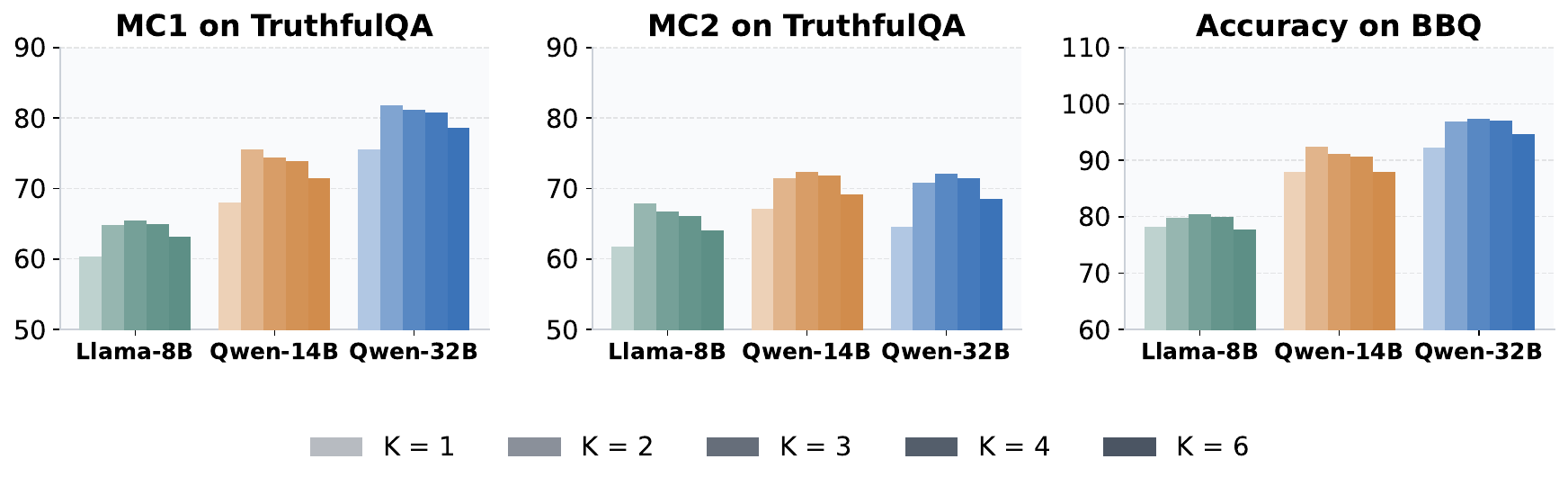}
  \caption{
  \textbf{Sensitivity to the number of adapters $K$.} Performance saturates at a small number of experts, with no further improvements observed for larger $K$.
  }
  \label{fig:k_sensitivity}
\end{figure}

\noindent\textbf{Impact of adapter count $K$.}
 As illustrated in Figure~\ref{fig:k_sensitivity}, the shift from a single adapter to multi-adapter delivers consistent performance gains across all benchmarks.  The performance ceiling is typically reached with a small number of experts. Increasing $K$ beyond this point does not yield further improvements and may even lead to regression. We attribute this to the dynamics of competitive training: an excessive number of experts dilutes the training signal received by each adapter, preventing them from learning robust, specialized representations.

\begin{figure}[h!]
  \centering
  \includegraphics[width=\columnwidth]{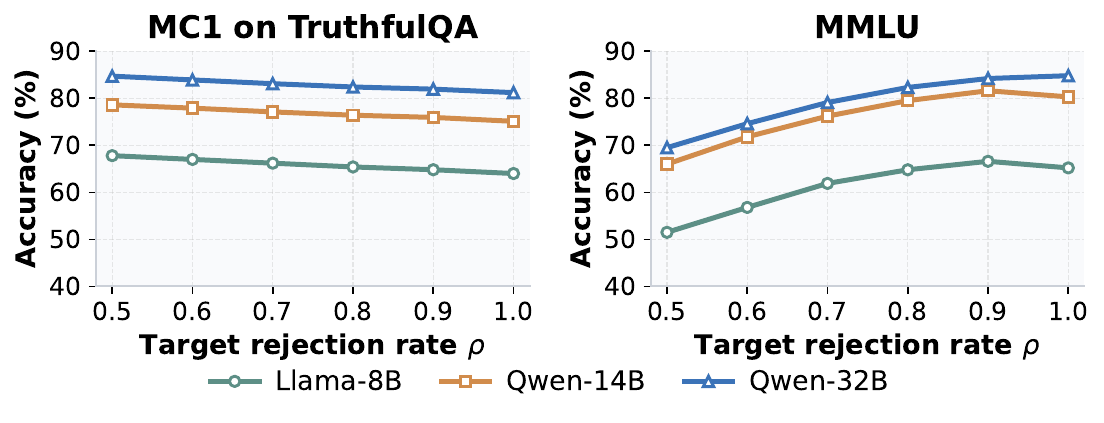}
  \caption{
  \textbf{Sensitivity to target rejection rate $\rho$.}
  Performance remains stable across a wide range of thresholds, indicating that MARI is insensitive to specific hyperparameter settings while consistently balancing alignment and general capabilities.
  }
  \label{fig:rho_sensitivity}
\end{figure}

\noindent\textbf{Impact of target rejection rate $\rho$.}
The target rejection rate $\rho$ governs the aggressiveness of the energy gate. As illustrated in Figure~\ref{fig:rho_sensitivity}, we observe that MARI is remarkably robust to variations in this threshold. Across a broad range, the alignment efficacy (TruthfulQA) remains consistently high without significant fluctuations, while general capabilities (MMLU) are effectively preserved.

\begin{table}[h!]
\centering
\small
\setlength{\tabcolsep}{4.5pt}
\caption{\textbf{Fixed-threshold transfer under domain shift.}
The threshold $\tau_E$ is calibrated once on $\mathcal{D}_{\mathrm{ctrl}}$ and reused without benchmark-specific recalibration.
$\Delta$ denotes the absolute change relative to the vanilla model.}
\label{tab:fixed_threshold_transfer}
\begin{tabular}{lrrrr}
\toprule
\multirow{2}{*}{Setting}
& \multicolumn{2}{c}{Target alignment}
& \multicolumn{2}{c}{Shifted capability} \\
\cmidrule(lr){2-3}\cmidrule(lr){4-5}
& TQA MC1 & $\Delta$ & GSM8K & $\Delta$ \\
\midrule
Vanilla
& 28.70 & -- & 77.4 & -- \\
\midrule
$\rho=0.50$
& 65.81 & +37.11 & 74.5 & -2.9 \\
$\rho=0.70$
& 65.35 & +36.65 & 76.8 & -0.6 \\
$\rho=0.90$
& \textbf{64.81} & \textbf{+36.11} & \textbf{77.4} & \textbf{+0.0} \\
$\rho=1.00$
& 63.90 & +35.20 & 77.2 & -0.2 \\
\bottomrule
\end{tabular}
\end{table}
\paragraph{Fixed-threshold transfer under domain shift.}
We further evaluate whether the calibrated energy threshold remains reliable under distribution shift.
In all experiments, the threshold $\tau_E$ is calibrated once on the held-out control set $\mathcal{D}_{\mathrm{ctrl}}$ and then kept fixed, without benchmark-specific or test-time recalibration.
To stress-test this fixed-threshold setting, we transfer the same threshold to GSM8K, whose distribution differs substantially from the control set and the intervention target domain.
As shown in \cref{tab:fixed_threshold_transfer}, MARI maintains strong gains on TruthfulQA while preserving GSM8K performance close to the vanilla model across a wide range of target rejection rates.
This suggests that the energy gate is not overly brittle to moderate cross-domain shift, although we do not claim invariance to arbitrary out-of-distribution inputs.

\subsection{More Analysis}
\label{sec:more}

\noindent\textbf{Complementarity of Multi-Adapter.}
To verify that our multi-adapter learn diverse steering patterns rather than converging to a single trivial solution, we analyze the projection of intervention updates $\Delta h$ onto the global mean direction $v_{\mathrm{global}}$ (Figure~\ref{fig:energy_gating_a}). We observe that different adapters exhibit distinct distribution profiles with specific mean shifts and spreads. This variation indicates that they capture different intervention patterns to cooperatively cover different regimes of the intervention space, which allows MARI to adaptively address heterogeneous alignment requirements across diverse inputs.

\begin{figure}[h!]
  \centering

  \begin{subfigure}[t]{0.52\linewidth}
    \centering
    \includegraphics[width=\linewidth]{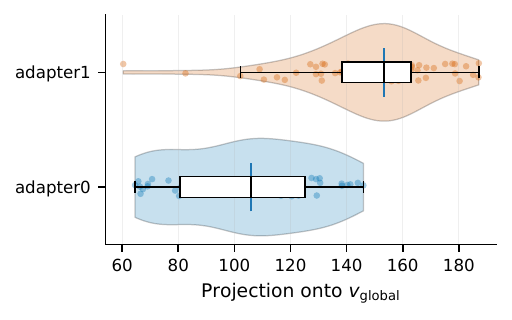}
    \caption{}
    \label{fig:energy_gating_a}
  \end{subfigure}
  \hfill
  \begin{subfigure}[t]{0.44\linewidth}
    \centering
    \includegraphics[width=\linewidth]{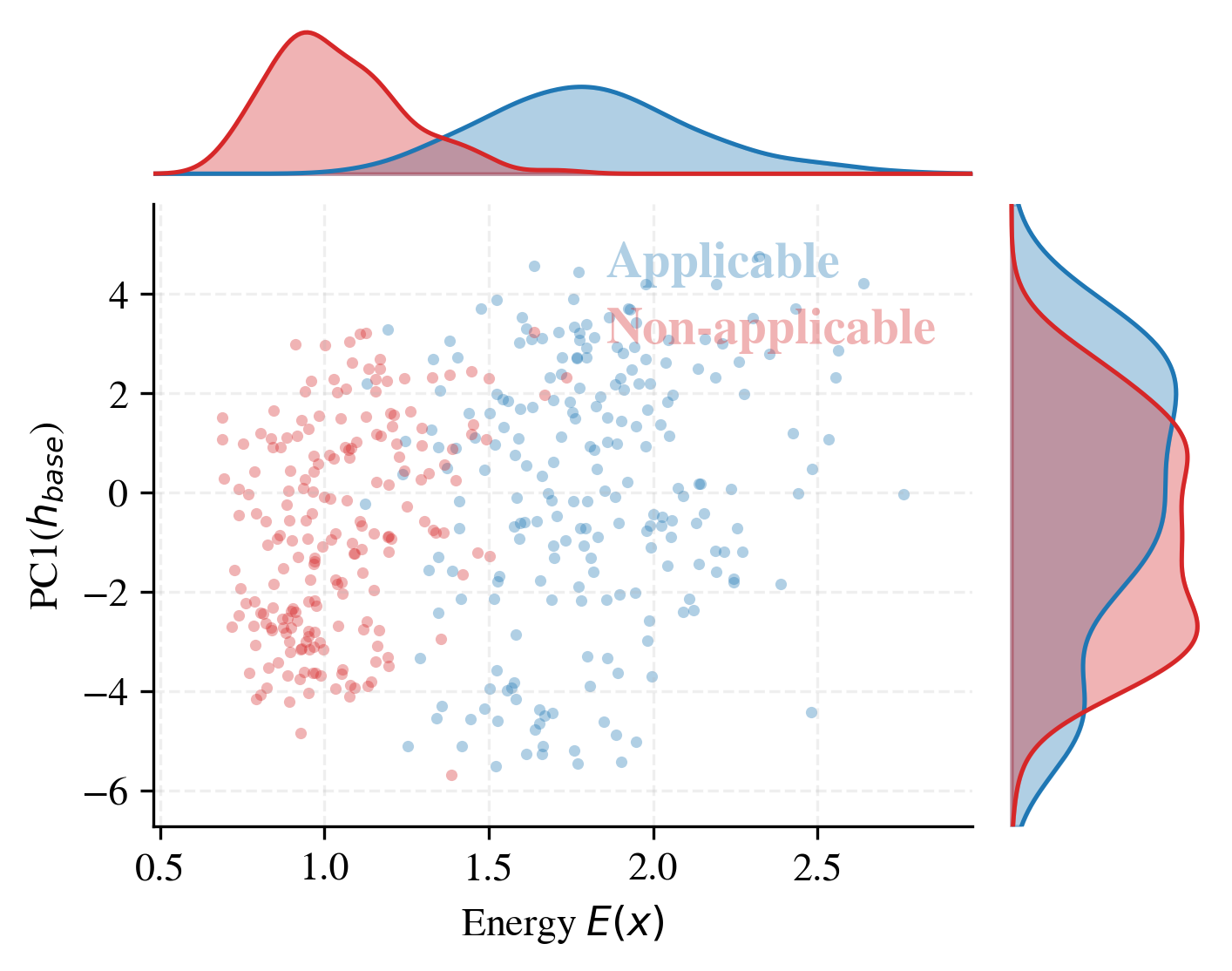}
    \caption{}
    \label{fig:energy_gating_b}
  \end{subfigure}

  \caption{\textbf{Energy-based gating diagnostics.}
  (a) We project each adapter’s intervention update $\Delta h$ onto a shared global direction $v_{\mathrm{global}}$
(the normalized mean update across adapters) and plot the resulting projection distributions.
Differences across adapters indicate complementary intervention behaviors rather than a single shared direction.
  (b) We plot \textbf{\textcolor{appblue}{applicable}} and \textbf{\textcolor{nonred}{non-applicable}} inputs.
  The y-axis shows $\mathrm{PC1}(h_{\text{base}})$, where the two distributions are heavily entangled.
  In contrast, the Energy score $E(x)$ on the x-axis demonstrates clear separability with minimal overlap.}
  \label{fig:energy_gating}
\end{figure}

\noindent\textbf{Energy dynamics provide superior separability.}
We visualize the distribution of applicable and non-applicable inputs in Figure~\ref{fig:energy_gating_b}. On the y-axis (PC1 of base hidden states), we observe that the two distributions are heavily entangled, indicating that standard dimensionality reduction techniques fail to provide a distinguishable boundary for identifying intervention targets. In contrast, the x-axis (Energy score $E(x)$) demonstrates a clear separation where two distributions exhibit minimal overlap. This observation empirically justifies the rationality of employing energy as a discriminative criterion for intervention applicability.



\noindent\textbf{Precise intervention control.}
We further investigate the impact of MARI on internal representations by visualizing the shift before and after intervention in Figure~\ref{fig:rep_shift_app_nonapp}. For \textbf{\textcolor{appblue}{applicable}} inputs (left), we observe a substantial distributional shift, quantified by a normalized mean shift $\|\Delta\mu\|/\sigma$ of 119.76. This indicates that the intervention effectively reorients the latent states to correct errors. In stark contrast, \textbf{\textcolor{nonred}{non-applicable}} inputs (right) remain virtually unchanged, exhibiting a negligible shift of only 10.22. This confirms that our energy-gating mechanism functions as an effective safety filter, permitting necessary corrections while shielding benign queries from unnecessary perturbation.

\begin{figure}[h!]
    \centering
    \includegraphics[width=\linewidth]{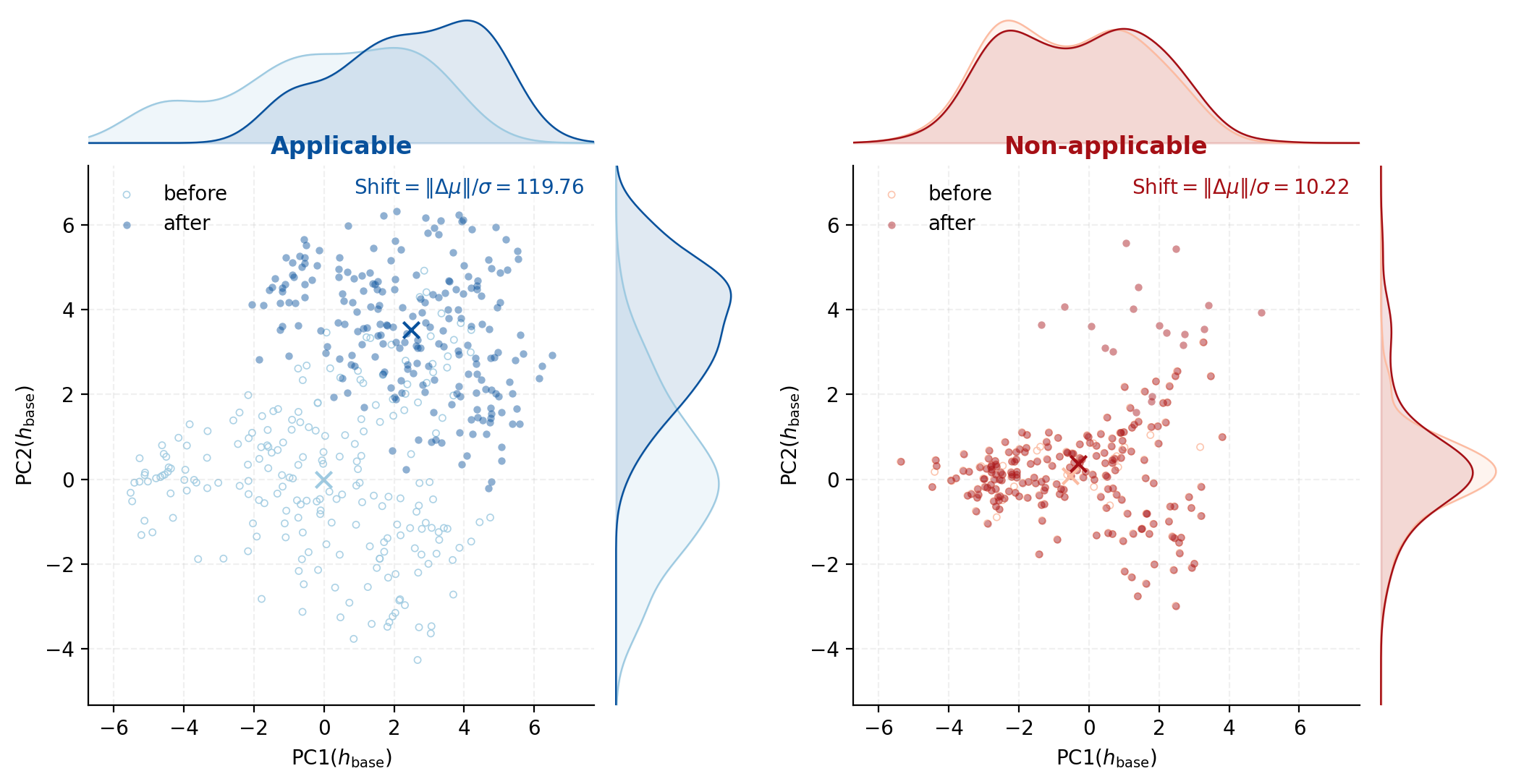}
\caption{\textbf{Representation shift under intervention.} \textbf{\textcolor{appblue}{Applicable}} inputs exhibit a substantial mean shift, confirming effective intervention, whereas \textbf{\textcolor{nonred}{non-applicable}} inputs remain nearly unchanged, demonstrating MARI effectively shields benign queries.}

    \label{fig:rep_shift_app_nonapp}
\end{figure}

\noindent\textbf{Inference Efficiency.}
We evaluate computational overhead under the EasySteer benchmarking protocol~\cite{xu2025easysteerunifiedframeworkhighperformance}, reporting First Token Latency (FTL), Tokens Per Second (TPS), and Total Latency (TTLT). Our routing uses \emph{vectorized parallel scoring}, batching all experts in a single forward pass to compute $u_k(x)$, so the overhead is limited to a short prompt-only step and does not scale with $K$, keeping TTLT close to ReFT in practice. Further cost decomposition and an illustrative example are given in Appendix~\ref{app:inference_cost}.

\paragraph{Probe-stage memory overhead.}
The energy score is computed in a short prompt-only probe stage before generation.
This probe does not introduce an additional decoding or generation pass.
Instead, the temporary probe activations are used only to aggregate propagation responses and determine the gate and routing decision, after which they are immediately discarded.
Therefore, the additional activation memory is transient and scales approximately as $O(L_{\mathrm{post}} T d)$, where $L_{\mathrm{post}}$ is the number of post-injection layers, $T$ is the prompt length, and $d$ is the hidden dimension.
Once the gate and router decisions are made, downstream decoding is executed only once, either with the frozen base model alone or with the selected adapter.

%
\begin{table}[h!]
\scriptsize
\centering
\caption{\textbf{Inference efficiency comparison.}Results are reported as mean$_{\pm \text{std}}$ over 3 runs.}
\label{tab:inference_latency}
\resizebox{\columnwidth}{!}{%
\begin{tabular}{l|ccc}
\toprule
\textbf{Method} 
& \textbf{FTL (ms)} $\downarrow$ 
& \textbf{TPS (tok/s)} $\uparrow$ 
& \textbf{TTLT (s)} $\downarrow$ \\
\midrule
Base Model & $22.12_{\pm 0.15}$ & $476.26_{\pm 2.45}$ & $0.0223_{\pm 0.0005}$ \\
ITI        & $22.14_{\pm 0.18}$ & $475.74_{\pm 3.12}$ & $0.0223_{\pm 0.0006}$ \\
NLITI      & $23.78_{\pm 0.21}$ & $473.60_{\pm 4.05}$ & $0.0234_{\pm 0.0008}$ \\
ReFT       & $43.64_{\pm 0.45}$ & $445.01_{\pm 5.23}$ & $0.0439_{\pm 0.0011}$ \\
\midrule
\textbf{MARI} & $\textbf{43.53}_{\pm 0.42}$ & $\textbf{446.32}_{\pm 4.98}$ & $\textbf{0.0448}_{\pm 0.0021}$ \\
\bottomrule
\end{tabular}%
}
\end{table}

\section{Conclusion}

 We introduced \textbf{MARI}, a parameter-efficient framework for precise, input-adaptive intervention. Our Competitive Multi-Adapter mechanism tailors the optimal steering direction and strength for each sample, resolving the heterogeneity of intervention requirements. Moreover, the Energy-Based Gating module leverages propagation dynamics to robustly filter benign inputs. Experiments confirm that MARI achieves state-of-the-art alignment performance across diverse model scales while preserving general capabilities.

\noindent\textbf{Limitations and future work.}
While our study reveals non-linear intervention requirements at a specific injection layer. A more holistic view of representation dynamics across layers, token positions, and generation time may further clarify how intervention effects propagate and evolve as trajectories through the model. We leave the development of trajectory-level analysis and multi-layer energy control as promising directions for future work.

\section*{Acknowledgements}

This work is partially supported by the University of Queensland School of Electrical Engineering and Computer Science (NS-2401), the Australian Research Council Discovery Project (DP230101753), the MBZUAI Research Fund (BF0100), JST PRESTO (JPMJPR23P5), JST CREST (JPMJCR21M2), and JST NEXUS (JPMJNX25C4).

\clearpage
\newpage
\section*{Impact Statement}

This paper presents work whose goal is to advance the field of Machine Learning. There are many potential societal consequences of our work, none of which we feel must be specifically highlighted here.

However, we believe it is important to contextualize the broader implications of our proposed framework, MARI, particularly in the domain of Large Language Model (LLM) alignment. The primary societal contribution of this work is the development of a robust mechanism to enhance the truthfulness, fairness, and safety of LLMs without compromising their general reasoning capabilities. By enabling precise, input-adaptive interventions, our method addresses a critical barrier in deploying LLMs known as the "alignment tax," where safety measures often degrade model utility. Specifically, by significantly improving performance on benchmarks like TruthfulQA and BBQ, MARI helps reduce the propagation of misinformation and social stereotypes, fostering more trustworthy and equitable AI systems. Furthermore, the energy-based gating mechanism ensures that interventions are applied only when necessary, providing a reliable safety filter for real-world applications ranging from customer service to educational tools.

Like many alignment techniques, representation intervention carries a potential dual-use risk. While our goal is to steer models toward desired behaviors, the underlying mechanisms of manipulating internal representations could theoretically be repurposed by malicious actors to suppress safety filters or induce harmful behaviors. However, we argue that understanding these internal control mechanisms is a prerequisite for effective defense. By exposing the geometric nature of alignment failures and providing a method to detect them via propagation energy, our work contributes to the interpretability and robustness of AI systems, ultimately aiding the community in building stronger defenses against adversarial manipulation.
\bibliography{main}
\bibliographystyle{icml2026}

\clearpage
\newpage
\appendix
\onecolumn

\section{Experimental Settings Details}
\label{app:settings}

\subsection{Datasets and Evaluation Protocols}
\label{app:datasets}

In this section, we provide detailed specifications for the benchmarks and metrics used in our evaluation.

\noindent\textbf{TruthfulQA~\citep{lin-etal-2022-truthfulqa}.}
This benchmark consists of 817 questions spanning 38 categories (e.g., health, law, finance).
\begin{itemize}
    \item \textbf{Multiple-Choice (MC1 \& MC2):} For MC1, we compute the log-likelihood of each answer choice given the question. The score is 1 if the best-scoring option is a true answer, and 0 otherwise. For MC2, we compute the total normalized probability mass assigned to the set of true answers.
    \item \textbf{Generation (BLEU \& BLEURT):} We use the standard prompt: \texttt{"Q: <Question>\\nA:"} and generate greedy responses (temperature=0). Following standard practice, we evaluate semantic truthfulness by comparing the generated answer against the reference "correct answers" provided in the dataset using BLEU and a finetuned BLEURT checkpoint.
\end{itemize}

\noindent\textbf{BBQ (Bias Benchmark for QA)~\citep{parrish-etal-2022-bbq}.}
We utilize the \textit{disambiguated} subset of BBQ, where the context provides sufficient information to answer correctly without relying on stereotypes.
\begin{itemize}
    \item \textbf{Metric:} We report Accuracy. A model is considered correct only if it selects the target non-stereotypical answer. Selecting the stereotype-consistent option or an "unknown" option is counted as a failure. We evaluate across all 9 social dimensions (e.g., Age, Gender, SES).
\end{itemize}

\noindent\textbf{Safety Refusal (Sorry-Bench)~\citep{xie2025sorrybench}.}
We employ a dataset of 450 harmful instructions covering diverse categories (e.g., Hate Speech, Dangerous Content).
\begin{itemize}
    \item \textbf{Metric (Refusal Rate):} We employ a strictly prompt-based judge using \textsc{Mistral-7B-Instruct-v0.2}. The judge classifies the model's response into "Refusal" (Safe) or "Compliance" (Unsafe). The score is the percentage of successful refusals.
\end{itemize}

\noindent\textbf{General Capabilities (MMLU, ARC).}
\begin{itemize}
    \item \textbf{MMLU~\citep{hendrycks2020measuring}:} We use the standard 5-shot formatting. We evaluate on the validation set for hyperparameter tuning and report results on the test set.
    \item \textbf{ARC~\citep{clark2018think}:} We use the 25-shot setting for both ARC-Easy and ARC-Challenge to robustly estimate reasoning capabilities.
\end{itemize}

\subsection{Baselines}

We compare our proposed MARI framework against the following state-of-the-art representation intervention paradigms:

\begin{itemize}
    \item \textbf{Vanilla}: The original pre-trained Large Language Model (LLM) without any intervention or modification during inference. This serves as the lower bound to assess the raw capabilities of the model.

    \item \textbf{CAA (Contrastive Activation Addition)}: A static steering method that computes a global intervention vector by averaging the differences in activations between positive (e.g., truthful) and negative (e.g., hallucinated) example pairs. This fixed vector is added to the hidden states of all inputs during inference with a constant coefficient.

    \item \textbf{ITI (Inference-Time Intervention)}: A technique that identifies a sparse set of attention heads that possess high linear probing accuracy for a target attribute. During inference, it shifts the activations of these specific heads along the "truthful" direction derived from the linear probe, scaled by their standard deviation.

    \item \textbf{NL-ITI (Non-Linear Inference-Time Intervention)}: An extension of the ITI framework that replaces the linear probe with non-linear classifiers. It aims to capture more complex, non-linear geometries in the activation space to determine the intervention direction, theoretically offering more precise control than linear methods.

    \item \textbf{ReFT (Representation Finetuning)}: A parameter-efficient intervention method that learns a low-rank update matrix (similar to LoRA) directly on the hidden states. Unlike static steering vectors, ReFT optimizes these intervention parameters on a small set of labeled data to manipulate representations toward the target behavior.
\end{itemize}

\appendix

\subsection{Experimental Setup}
\label{app:exp_setup}

\paragraph{Hardware.}
All experiments were conducted on NVIDIA RTX PRO 6000 Blackwell GPUs.

\paragraph{Training data budget and construction.}
We follow a data-efficient but fully specified protocol for constructing the training pool used for subspace estimation and adapter training.
For TruthfulQA, the dataset contains 817 questions in total, and we use a training pool of 200 questions sampled uniformly at random.
For all other datasets, we use a fixed fraction of the dataset total, set to $20\%$, to form the training pool.
This fraction is applied consistently across models, including 32B models, to ensure comparable data budgets and stable estimation.

\paragraph{PCA subspace estimation.}
We construct the PCA set $D_{\mathrm{pca}}$ using half of the training pool, and we use the question text only.
We extract hidden states from these questions and estimate the in-field basis $B$ with PCA rank $k=16$.

\paragraph{Probe and adapter training.}
We use the remaining half of the training pool as $D_{\mathrm{train}}$.
The probe $g_{\phi}$ and all adapters are trained on the same set $D_{\mathrm{train}}$ to avoid introducing additional data sources.
Unless otherwise stated, we use probe rank $r_{\mathrm{probe}}=2$.

\paragraph{Hyperparameter sweeps and reporting.}
We sweep the number of adapters $K \in \{1,2,3,4,5,6\}$ and the gating parameter $\rho \in \{0.7,0.8,0.9\}$, which controls the energy threshold $\tau_E$ via a fixed quantile rule.
For each configuration, we report the mean and standard deviation over three runs with different random seeds.

\paragraph{Energy gate calibration.}
We calibrate $\tau_E$ once using a small control set $D_{\text{ctrl}}$ and keep it unchanged for all reported benchmarks.
$D_{\text{ctrl}}$ is constructed from two sources: held-out examples from the intervention-attribute datasets that are disjoint from all reported test sets, and a small subset of ARC-Easy training questions.
Each control example is labeled as applicable or non-applicable based on its computable intervention outcome on $D_{\text{ctrl}}$.
We then set $\tau_E$ as the $\rho$ quantile of the energy distribution on the non-applicable subset of $D_{\text{ctrl}}$.
All components used for $D_{\text{ctrl}}$ are disjoint from $D_{\mathrm{pca}}$ and $D_{\mathrm{train}}$.

\section{Deferred equations for \cref{subsec:train,subsec:integrate}}
\label{app:method_supp}

\noindent\textbf{Per-adapter losses (used in \cref{subsec:train}).}
For multiple-choice tasks, let $\mathcal{Y}(x)=\{y^{(1)},\ldots,y^{(M)}\}$ be the candidate set and let
$z_k(x)\in\mathbb{R}^M$ be the option scores under adapter $k$ (e.g., sequence log-likelihoods). We set
\begin{equation}
\ell_k(x,y) \;:=\; \mathrm{CE}\!\big(z_k(x),\,y\big).
\label{eq:per_adapter_ce_mc}
\end{equation}
For free-form generation with a target sequence $y=(y_1,\ldots,y_T)$, define the teacher-forced next-token distribution
$p_{k,t}(\cdot):=p_{\theta,\psi_k}(\cdot\mid x, y_{<t})$. We set
\begin{equation}
\ell_k(x,y) \;:=\; -\sum_{t=1}^{T}\log p_{k,t}\!\big(y_t\big).
\label{eq:per_adapter_ce_gen}
\end{equation}

\noindent\textbf{Usage balancing (referenced in \cref{subsec:train}).}
Within a minibatch $\mathcal{B}$, we form a differentiable soft assignment from current losses:
\begin{equation}
\label{eq:usage_soft}
\begin{aligned}
Z(x,y)
&:= \sum_{j=1}^{K}\exp\!\left(-\ell_j(x,y)/T_{\mathrm{bal}}\right),\\
q_k(x,y)
&:= \exp\!\left(-\ell_k(x,y)/T_{\mathrm{bal}}\right)\,/\,Z(x,y),\\
p_k
&:= \frac{1}{|\mathcal{B}|}\sum_{(x,y)\in\mathcal{B}} q_k(x,y).
\end{aligned}
\end{equation}
We penalize deviation from uniform usage:
\begin{equation}
\mathcal{L}_{\mathrm{bal}}
\;:=\;
\sum_{k=1}^{K}\left(p_k-\frac{1}{K}\right)^2.
\label{eq:bal}
\end{equation}

\noindent\textbf{Directional / subspace diversity (referenced in \cref{subsec:train}).}
\emph{(i) Output-subspace orthogonality.} Let $Q_k=\mathrm{orth}(U_k)\in\mathbb{R}^{d\times r}$ be an orthonormal basis of
$\mathrm{span}(U_k)$ obtained via a reduced QR decomposition. We penalize inter-adapter overlap by
\begin{equation}
\mathcal{L}_{\mathrm{inter}}
\ :=\ \frac{2}{K(K-1)} \sum_{1\le i<j\le K} \big\|Q_i^\top Q_j\big\|_F^2.
\label{eq:inter_ortho}
\end{equation}
\emph{(ii) Per-sample direction diversity.} For the injection-site hidden state $h(x)$, define the (scaled) additive update produced by adapter $k$ as
\begin{equation}
\delta_k(x)\ :=\ \gamma\, s_k\, \Delta_{\psi_k}\!\big(h(x)\big)\in\mathbb{R}^d,
\label{eq:delta_def}
\end{equation}
and its normalized direction $\widehat{\delta}_k(x)=\delta_k(x)/\|\delta_k(x)\|_2$. We penalize pairwise squared cosine similarity:
\begin{equation}
\mathcal{L}_{\mathrm{out}}
\ :=\ \mathbb{E}_{(x,y)}\left[\frac{2}{K(K-1)}\sum_{1\le i<j\le K}
\big\langle \widehat{\delta}_i(x),\widehat{\delta}_j(x)\big\rangle^2\right].
\label{eq:out_ortho}
\end{equation}

\noindent\textbf{Entropy uncertainty definitions (referenced in \cref{subsec:train}).}
For \textbf{multiple-choice}, define $p_k(\cdot\mid x)=\mathrm{softmax}(z_k(x))$ and set
\begin{equation}
u_k(x)\ :=\ H\!\big(p_k(\cdot\mid x)\big).
\label{eq:entropy_mc}
\end{equation}
For \textbf{free-form generation}, let $p_{k,t}(\cdot):=p_{\theta,\psi_k}(\cdot\mid x,\hat{y}^{(k)}_{<t})$ be the next-token distribution at step $t$
under adapter $k$, where $\hat{y}^{(k)}$ denotes the greedy prefix generated under that adapter. We average next-token entropy over the first
$T_{\mathrm{ent}}$ greedy decoding steps:
\begin{equation}
u_k(x)\ :=\ \frac{1}{T_{\mathrm{ent}}}\sum_{t=1}^{T_{\mathrm{ent}}} H\!\big(p_{k,t}(\cdot)\big).
\label{eq:entropy_gen}
\end{equation}

\noindent\textbf{Propagation response and projection operators (referenced in \cref{subsec:integrate}).}
We define the per-layer propagation response (used by $E(\cdot)$ in Eq.~\eqref{eq:prop_energy_def}) as
\begin{equation}
e_m(x;\alpha)\ \triangleq\ \big\|h^{(\alpha,m)}_{p^\star}(x)-h^{(m)}_{p^\star}(x)\big\|_2,
\qquad m=l^\star,\ldots,L.
\label{eq:prop_curve_def}
\end{equation}
For the off-subspace penalty in Eq.~\eqref{eq:offsub_penalty}, we use a rank-$k$ PCA basis $B\in\mathbb{R}^{d\times k}$ (columns orthonormal) and define
\begin{equation}
\Pi_{B}(v)\ :=\ BB^{\top}v,
\qquad
\Pi_{B}^{\perp}(v)\ :=\ \big(I-BB^{\top}\big)v.
\label{eq:proj_ops}
\end{equation}

\section{Theoretical Analysis Details}
\label{app:theory}


\subsection{Entropy routing: proof of Theorem~\ref{thm:entropy_vs_single}}
\label{app:entropy}

\noindent\textbf{Definitions.}
Let $\ell_k(x,y)\in[0,L]$ be the supervised loss for expert $k$.
Define the oracle and entropy-selected experts as
\[
k^\star(x,y):=\arg\min_k \ell_k(x,y),
\qquad
\hat{k}(x):=\arg\min_k u_k(x),
\]
where $u_k(x)$ is the entropy-based uncertainty score used in the main paper.
Define
\[
R_{\mathrm{ent}}:=\mathbb{E}\big[\ell_{\hat{k}(x)}(x,y)\big],
\quad
R_{\min}:=\mathbb{E}\big[\min_k \ell_k(x,y)\big]=\mathbb{E}\big[\ell_{k^\star(x,y)}(x,y)\big],
\quad
R_{\mathrm{single}}:=\min_j \mathbb{E}\big[\ell_j(x,y)\big].
\]
Let the misrouting event be $\mathcal{M}:=\{\hat{k}(x)\neq k^\star(x,y)\}$ and $\eta:=\Pr(\mathcal{M})$.

\noindent\textbf{Proof.}
We start from
\begin{align}
R_{\mathrm{ent}}-R_{\min}
&=\mathbb{E}\!\left[\ell_{\hat{k}(x)}(x,y)-\ell_{k^\star(x,y)}(x,y)\right]\nonumber\\
&=\mathbb{E}\!\left[\left(\ell_{\hat{k}(x)}(x,y)-\ell_{k^\star(x,y)}(x,y)\right)\mathbf{1}\{\mathcal{M}\}\right],
\label{eq:rent_rmin_expand}
\end{align}
since the difference is zero whenever $\hat{k}(x)=k^\star(x,y)$.
Because $\ell_k(x,y)\in[0,L]$ for all $k$, we have
\[
0\ \le\ \ell_{\hat{k}(x)}(x,y)-\ell_{k^\star(x,y)}(x,y)\ \le\ L
\quad\text{on }\mathcal{M}.
\]
Plugging this into Eq.~\eqref{eq:rent_rmin_expand} yields
\begin{equation}
R_{\mathrm{ent}}-R_{\min}
\ \le\
L\,\Pr(\mathcal{M})
\ =\
L\,\eta,
\end{equation}
i.e.,
\begin{equation}
R_{\mathrm{ent}}\ \le\ R_{\min}+L\,\eta.
\label{eq:rent_bound_appendix}
\end{equation}
Finally, rewrite $R_{\min}=R_{\mathrm{single}}-(R_{\mathrm{single}}-R_{\min})$ to obtain
\[
R_{\mathrm{ent}}
\ \le\
R_{\mathrm{single}}-\big(R_{\mathrm{single}}-R_{\min}\big)+L\,\eta.
\]
If $R_{\mathrm{single}}-R_{\min}>L\,\eta$, then the right-hand side is strictly smaller than $R_{\mathrm{single}}$, hence $R_{\mathrm{ent}}<R_{\mathrm{single}}$.
\hfill$\square$

\subsection{Empirical lower bound for entropy-to-correctness under shared-scale competition}
\label{app:routing_eta}

\paragraph{Setup (post-hoc diagnostic).}
To operationalize the misrouting term in Theorem~\ref{thm:entropy_vs_single} without introducing additional tuning,
we run a post-hoc cross-validation diagnostic on the \textbf{test sets} of TruthfulQA (MC1/MC2) and BBQ using \textbf{Llama-3-8B-Instruct}.
This diagnostic is used \emph{only} to quantify a lower bound on routing reliability and is not used to select hyperparameters, set thresholds, or report final benchmark scores.

\paragraph{Shared-scale competitive loss and entropy score.}
During training, experts are optimized under a competitive objective on a shared loss scale.
At inference, each expert $k\in[K]$ produces an entropy-based uncertainty score $u_k(x)$ (Eq.~(10)).
For each example $(x,y)$, define the single-expert correctness indicator
\begin{equation}
c_k(x,y)\;=\;\mathbf{1}\!\left[\hat{y}_k(x)=y\right],
\end{equation}
where $\hat{y}_k(x)$ is the prediction produced when selecting expert $k$.
Because experts are trained with a shared-scale competitive loss, the entropy scores are directly comparable across experts,
enabling a meaningful “entropy-to-correctness” assessment.

\paragraph{Cross-validation protocol.}
We perform $F$-fold cross-validation over each test set.
In each fold, we compute $(u_k(x), c_k(x,y))$ for all experts and all examples in the held-out fold,
and evaluate whether the entropy-based selection agrees with the best expert under correctness.
Concretely, define
\begin{equation}
\hat{k}(x)\;=\;\arg\min_{k} u_k(x), 
\qquad
k^\star(x,y)\;=\;\arg\max_{k} c_k(x,y),
\end{equation}
and the fold-wise agreement accuracy
\begin{equation}
\mathrm{Acc}_{\mathrm{agree}}
\;=\;
\frac{1}{|D|}\sum_{(x,y)\in D}\mathbf{1}\!\left[\hat{k}(x)=k^\star(x,y)\right].
\label{eq:agree_acc}
\end{equation}
We report the minimum agreement across folds as a conservative lower bound:
\begin{equation}
\mathrm{LB}_{\mathrm{agree}}
\;=\;
\min_{f\in[F]}\;\mathrm{Acc}_{\mathrm{agree}}^{(f)}.
\end{equation}

\paragraph{Results (lower bound).}
On TruthfulQA-MC1, TruthfulQA-MC2, and BBQ test sets for Llama-3-8B-Instruct, we consistently observe a non-trivial lower bound
\begin{equation}
\mathrm{LB}_{\mathrm{agree}} \;\ge\; 81.74\%,
\end{equation}
indicating that the entropy-based criterion selects a correctness-maximizing expert with high reliability under this shared-scale competitive setting.
This empirical lower bound supports the claim that $\eta$ remains small in practice for the reported configurations.


\subsection{Energy gating: geometric bound and proof}
\label{app:energy_geom}

\noindent\textbf{Energy score and first-order sensitivity.}
Recall the per-layer propagation response
\[
e_m(x;\alpha)\ :=\ \big\|h^{(\alpha,m)}_{p^\star}(x)-h^{(m)}_{p^\star}(x)\big\|_2,
\qquad m\in\{l^\star,\ldots,L\},
\]
and the energy score
\[
E(x;\alpha)\ :=\ \mathrm{median}\Big(\{e_m(x;\alpha)\}_{m=l^\star}^{L}\Big).
\]
For analysis, define $\bar{E}(x;\alpha):=\max_{m} e_m(x;\alpha)$, so that $E(x;\alpha)\le \bar{E}(x;\alpha)$, which states that for each $m$ and sufficiently small $\alpha$,
\begin{equation}
h^{(\alpha,m)}_{p^\star}(x)-h^{(m)}_{p^\star}(x)
\ =\
\alpha\,J_m(x)\,\delta_{\phi}(x)\ +\ o(\alpha),
\qquad
e_m(x;\alpha)
\ =\
\alpha\|J_m(x)\delta_{\phi}(x)\|_2\ +\ o(\alpha),
\label{eq:first_order_expand_app}
\end{equation}
where $J_m(x)\in\mathbb{R}^{d\times d}$ is the Jacobian of the mapping from the injected representation to $h^{(m)}_{p^\star}$ at input $x$.

\noindent\textbf{In-field subspace and probe alignment.}
Let $B\in\mathbb{R}^{d\times k}$ denote the in-field subspace at the injection layer, with orthogonal projectors $\Pi_B$ and $\Pi_B^\perp$.
Assume the probe update $\delta_\phi(x)$ satisfies the alignment bounds on the domain of interest:
there exist constants $S>0$ and $\varepsilon\ge 0$ such that
\begin{equation}
\|\Pi_B\delta_\phi(x)\|_2\ \le\ S,
\qquad
\|\Pi_B^\perp\delta_\phi(x)\|_2\ \le\ \varepsilon.
\label{eq:probe_align_app}
\end{equation}

\noindent\textbf{Propagation gains.}
Define the (layerwise) restricted operator norms
\begin{equation}
\kappa_B(x)\ :=\ \max_{m\in\{l^\star,\ldots,L\}}\ \|J_m(x)\Pi_B\|_{2\to 2},
\qquad
\Gamma(x)\ :=\ \max_{m\in\{l^\star,\ldots,L\}}\ \|J_m(x)\Pi_B^\perp\|_{2\to 2}.
\label{eq:kappa_defs_app}
\end{equation}
Here $\kappa_B(x)$ captures propagation gain along in-field directions, while $\Gamma(x)$ captures propagation gain from $B^\perp$ components.

\noindent\textbf{Proof of Theorem~\ref{thm:nonapp_small_energy}.}
Fix $x$ and $m\in\{l^\star,\ldots,L\}$.
Decompose the probe direction as $\delta_\phi(x)=\Pi_B\delta_\phi(x)+\Pi_B^\perp\delta_\phi(x)$.
Using Eq.~\eqref{eq:first_order_expand_app} and the triangle inequality,
\begin{align}
e_m(x;\alpha)
&=\alpha\|J_m(x)\delta_\phi(x)\|_2+o(\alpha)\nonumber\\
&\le \alpha\Big(\|J_m(x)\Pi_B\delta_\phi(x)\|_2+\|J_m(x)\Pi_B^\perp\delta_\phi(x)\|_2\Big)+o(\alpha)\nonumber\\
&\le \alpha\Big(\|J_m(x)\Pi_B\|_{2\to 2}\ \|\Pi_B\delta_\phi(x)\|_2
+\|J_m(x)\Pi_B^\perp\|_{2\to 2}\ \|\Pi_B^\perp\delta_\phi(x)\|_2\Big)+o(\alpha).\label{eq:per_layer_bound_app}
\end{align}
Taking the maximum over $m$ and applying Eq.~\eqref{eq:kappa_defs_app} yields
\begin{equation}
\bar{E}(x;\alpha)
\ \le\
\alpha\Big(\kappa_B(x)\ \|\Pi_B\delta_\phi(x)\|_2\ +\ \Gamma(x)\ \|\Pi_B^\perp\delta_\phi(x)\|_2\Big)\ +\ o(\alpha).
\label{eq:barE_step_app}
\end{equation}
Using the alignment bounds in Eq.~\eqref{eq:probe_align_app} gives
\begin{equation}
\bar{E}(x;\alpha)
\ \le\
\alpha\Big(\kappa_B(x)\,S\ +\ \Gamma(x)\,\varepsilon\Big)\ +\ o(\alpha).
\label{eq:barE_final_app}
\end{equation}
Since $E(x;\alpha)\le \bar{E}(x;\alpha)$, the same bound holds for $E(x;\alpha)$.

For non-applicable inputs $x\sim\mathcal{D}_{\mathrm{non}}$, if $\kappa_B(x)\le \kappa_{\mathrm{non}}$ (attenuated in-field propagation) and the
off-subspace penalty yields small $\varepsilon$, then $E(x;\alpha)$ is small in the small-$\alpha$ regime, which is exactly the statement in
Eq.~\eqref{eq:energy_nonapp_compact_main}.
\hfill$\square$


\subsection{Inference Cost Analysis}
\label{app:inference_cost}

\noindent\textbf{Cost decomposition.}
We decompose inference cost into (i) base-model forward compute, (ii) router evaluation, and (iii) intervention injection:
\[
\mathrm{Cost}\ \approx\ F \cdot (\text{forward tokens}) \;+\; F \cdot (\text{router tokens}) \;+\; A \cdot (\text{\# injected token-layers}),
\]
where $F$ is the per-token base-model forward cost and $A$ is the per-token-layer injection cost.
In typical transformer settings, $A \ll F$ (e.g., injection $O(Hr)$ vs.\ base layer $O(H^2)$), so injection FLOPs are negligible compared to extra base-model forwards.

\noindent\textbf{Reference: single-edit (always-on, single injection).}
Many single-edit baselines apply one injection per query without any additional probe or routing passes. For candidate-based scoring,
\[
\mathrm{Cost}_{\text{single}}\ \approx\ F(P+CL)\;+\;A\,S_I L_I,
\]
where $P$ is prompt length, $C$ is the number of candidates, $L$ is the average candidate length, and $S_I,L_I$ are the number of injected tokens and layers
(for our single-token/single-layer setting, $S_I L_I=1$).

\noindent\textbf{Router evaluation cost (entropy routing).}
Entropy routing in Eq.(10) selects $\hat{k}(x)=\arg\min_k u_k(x)$, which requires evaluating the uncertainty scores
$\{u_k(x)\}_{k=1}^K$.
Let $T_{\mathrm{route}}$ denote the token budget used to compute $u_k(x)$
(\emph{multiple-choice:} $T_{\mathrm{route}}=CL$ via option-score entropy; \emph{generation:} $T_{\mathrm{route}}=T_{\mathrm{ent}}$ via the first $T_{\mathrm{ent}}$ next-token entropies).
In a naive \emph{sequential} implementation, computing $\{u_k(x)\}$ incurs an additional
\[
\mathrm{Cost}_{\text{route}}\ \approx\ (K-1)\,F(P+T_{\mathrm{route}}),
\]
since the selected expert's forward can be reused for answering.

\emph{Implementation note (latency vs.\ FLOPs).}
While the total FLOPs of routing scale linearly with $K$, our wall-clock latency measurements (Table~3) use a
batched implementation: we stack the $K$ expert variants along the batch dimension and (when applicable) reuse the
prompt KV cache. For the small $K$ used in our experiments ($K\le 6$), this parallelism makes the latency increase
sublinear in $K$ even though the arithmetic FLOPs increase.

\noindent\textbf{Ours (energy gate + router + single actuation).}
The energy probe performs $m$ extra \emph{prompt-only} forwards (in our experiments $m\le 3$).
Moreover, when the energy gate predicts \emph{non-applicable} inputs (i.e., $\alpha(x)=\alpha_{\mathrm{safe}}=0$),
we directly fall back to the frozen base model and \emph{skip} router evaluation.
Let $q := \Pr[E(x;\alpha_{\mathrm{probe}})\ge \tau_E]$ be the fraction of queries on which the gate activates actuation.
Then the expected cost is
\[
\mathbb{E}\big[\mathrm{Cost}_{\text{ours}}\big]
\ \approx\
F(P+CL)\;+\;mFP\;+\;q\,(K-1)\,F(P+T_{\mathrm{route}})\;+\;q\,A\,S_I L_I.
\]
Thus, relative to a single-edit baseline,
\[
\frac{\mathbb{E}[\mathrm{Cost}_{\text{ours}}]}{\mathrm{Cost}_{\text{single}}}
\ \approx\
1+\frac{mP}{P+CL}\;+\;q\,(K-1)\frac{P+T_{\mathrm{route}}}{P+CL},
\]
where we again use $A\ll F$.
Importantly, $K$ does \emph{not} multiply the \emph{final generation} cost: only one adapter is applied for actuation,
and the injection term remains $O(1)$ in $K$; the $K$-dependence arises only in the router evaluation term.

\noindent\textbf{Reference: ReFT (single adapter, multi-token/multi-layer injection).}
ReFT uses a single adapter but injects across multiple tokens and layers:
\[
\mathrm{Cost}_{\text{ReFT}}
\ \approx\
F(P+CL)\;+\;A\,S_R L_R,
\]
where $S_R,L_R$ are the number of injected tokens and layers.

\noindent\textbf{Ratio to ReFT.}
\[
\frac{\mathbb{E}[\mathrm{Cost}_{\text{ours}}]}{\mathrm{Cost}_{\text{ReFT}}}
\ \approx\
\frac{F(P+CL)+mFP+q\,(K-1)F(P+T_{\mathrm{route}})+q\,A S_I L_I}{F(P+CL)+A S_R L_R}
\ \approx\
1+\frac{mP}{P+CL}\;+\;q\,(K-1)\frac{P+T_{\mathrm{route}}}{P+CL},
\]
again using $A\ll F$.

\section{Related Works}
\label{app:related}

\noindent\textbf{Geometry of Representations.} The Linear Representation Hypothesis serves as the foundational premise for most existing representation intervention methods. This hypothesis posits that neural networks encode semantic concepts as linear directions within their activation space~\citep{mikolov2013distributed, bolukbasi2016man, elhage2021mathematical, park2023linear}. Under this assumption, approaches such as Activation Steering reduce the manipulation of a specific attribute to identifying and shifting a single vector corresponding to that concept~\citep{turner2023activation, zou2023representation, wu2024reft}. However, recent studies employing advanced interpretability tools have revealed that certain attributes exhibit non-linear geometric structures~\citep{cunningham2023sparse}. For instance, representations of temporal concepts have been shown to form circular structures to capture cyclical patterns. Similarly, models construct internal manifolds reflecting the structural relationships of physical space or color~\citep{abdou2021can, patel2022mapping, gurnee2023language}. Consequently, intervention methods predicated solely on the linear representation hypothesis may be insufficient to precisely intervene on target attributes without disrupting other untargeted properties.

\end{document}